\renewcommand{\cite}{\citep}
\newcommand{\scite}{\citeyear}
\DeclareMathOperator*{\argmax}{\arg\max}
\newcommand\eugeneie[1]{}
\newcommand\cboutilier[1]{}
\newcommand\wrui[1]{}
\newcommand{\todo}[1]{}
\newcommand\eugeneie[1]{[\textcolor{blue}{EI: {#1}}]}
\newcommand\cboutilier[1]{[\textcolor{teal}{CEB: {#1}}]}
\newcommand\wrui[1]{[\textcolor{orange}{RW: {#1}}]}
\newcommand{\todo}[1]{\textcolor{red}{TODO: {#1}}}
\newcommand{\calS}{\mathcal{S}}
\newcommand{\calA}{\mathcal{A}}
\newcommand{\calI}{\mathcal{I}}
\newcommand{\calN}{\mathcal{N}}
\newcommand{\pctr}{\mathit{pCTR}}
\newcommand{\qbar}{\overline{Q}}
\newcommand{\scassm}{\textbf{SC}}
\newcommand{\rtassm}{\textbf{RTDS}}
\newcommand{\bfd}{\mathbf{d}}
\newcommand{\bfy}{\mathbf{y}}
\newcommand{\bfu}{\mathbf{u}}
\newcommand{\bfe}{\mathbf{e}}
\newcommand{\veps}{\varepsilon}
\newcommand{\SlateQ}{{\textsc{SlateQ}}}
\newtheorem{thm}{Theorem}
\newtheorem{prop}[thm]{Proposition}
\newtheorem{obs}[thm]{Observation}
\theoremstyle{definition}
\newcommand{\denselist}{\itemsep 0pt\partopsep 0pt}
\title{Reinforcement Learning for Slate-based Recommender Systems: A Tractable Decomposition and Practical Methodology\thanks{Parts of this paper appeared in \cite{slateQ:ijcai19}.}}
\author[1]{Eugene Ie$^{\dagger,\ddagger,}$}
\author[1]{Vihan Jain$^{\ddagger,}$}
\author[1]{Jing Wang$^{\ddagger,}$}
\author[2]{Sanmit Narvekar$^{\mathsection,}$}
\author[1]{Ritesh Agarwal}
\author[1]{Rui Wu}
\author[1]{Heng-Tze Cheng}
\author[3]{Morgane Lustman}
\author[3]{Vince Gatto}
\author[3]{Paul Covington}
\author[3]{Jim McFadden}
\author[1]{Tushar Chandra}
\author[1]{Craig Boutilier$^{\dagger,}$}
\affil[1]{Google Research}
\affil[2]{Department of Computer Science, University of Texas at Austin}
\affil[3]{YouTube, LLC}
\begin{document}

\maketitle

\let\oldthefootnote\thefootnote
\renewcommand{\thefootnote}{\fnsymbol{footnote}}
\footnotetext[2]{Corresponding authors: \url{{eugeneie,cboutilier}@google.com}.}
\footnotetext[3]{Authors Contributed Equally.}
\footnotetext[4]{Work done while at Google Research.}
\let\thefootnote\oldthefootnote

\begin{abstract}
\begin{small}
Most practical recommender systems focus on estimating immediate user engagement without considering the long-term effects of recommendations on user behavior. 
Reinforcement learning (RL) methods offer the potential to optimize recommendations for long-term user engagement. However, since users are often presented with slates of multiple items---which may have interacting effects on user choice---methods are required to deal with the combinatorics of the RL action space. In this work, we address the challenge of making slate-based recommendations to optimize
long-term value using RL. Our contributions are three-fold. (i) We develop \textsc{SlateQ}, a decomposition of value-based temporal-difference and Q-learning that renders RL tractable with slates. Under mild assumptions on user choice behavior, we show that the long-term value (LTV) of a slate can be decomposed into a tractable function of its component item-wise LTVs. (ii) We outline a methodology that leverages existing myopic learning-based recommenders to quickly develop a recommender that handles LTV. (iii) We demonstrate our methods in simulation, and validate the scalability of decomposed TD-learning using \SlateQ{} in live experiments on YouTube.
\end{small}
\end{abstract}

\section{Introduction}
\label{sec:intro}

Recommender systems have become ubiquitous, transforming user
interactions with products, services and content in a wide variety 
of domains. In content recommendation, recommenders generally surface
relevant and/or novel personalized
content based on learned models of \emph{user preferences}
(e.g., as in collaborative filtering \cite{breese-cf:uai98,grouplens:cacm97,srebro:nips04,salakhutdinov-mnih:nips07}) or predictive
models of \emph{user responses} to specific recommendations. Well-known
applications of recommender systems include video recommendations
on YouTube \cite{covington:recsys16}, movie recommendations on Netflix \cite{gomez2015netflix} and playlist construction on Spotify \cite{jacobson2016music}.
It is increasingly common to train deep neural networks
(DNNs) \cite{vandenoord:nips13,wang_etal:kdd15,covington:recsys16,cheng_wide_and_deep:dlrs16} to predict user responses (e.g.,
click-through rates, content engagement, ratings, likes) to
generate, score and serve candidate recommendations.

Practical recommender systems largely focus on \emph{myopic} prediction---estimating a user's \emph{immediate} response to a recommendation---without considering the long-term impact on subsequent user behavior. This can be limiting: modeling a recommendation's stochastic impact on the future affords opportunities to trade off user engagement in the near-term for longer-term benefit (e.g., by probing a user's interests, or improving satisfaction). As a result, research has increasingly turned to the sequential nature of user behavior using temporal models, such as hidden Markov models and recurrent neural networks \cite{rendle:www10,campos:umuai14,ruining_he:icdm16,sahoo:mis2012,tan_rnn_recommender:2016,ahmed_smola_et_al:wsdm17}, and long-term planning using \emph{reinforcement learning (RL)} techniques (e.g., \cite{shani:jmlr05,taghipour:recsys07,facebook_horizon:2018}). However, the application of RL has largely been confined to restricted domains due to the complexities of putting such models into practice at scale.

In this work, we focus on the use of RL to maximize long-term value (LTV) of recommendations to the user, specifically, long-term user engagement. We address two key challenges facing the deployment of RL in practical recommender systems, the first algorithmic and the second methodological.

Our first contribution focuses on
the algorithmic challenge of \emph{slate recommendation} in RL.
One challenge in many recommender systems is that, rather than a single item,
multiple items are recommended to a user simultaneously; that is, users are
presented with a \emph{slate} of recommended items. This induces a RL problem with a large combinatorial action space, which in turn imposes significant demands on exploration, generalization and action optimization. Recent approaches to RL with such combinatorial actions \cite{sunehag2015deep,metz2017discrete} make inroads into this problem, but are unable to scale to problems of the size encountered in large, real-world recommender systems, in part because of their generality. In this work, we develop a new \emph{slate decomposition} technique called \emph{\SlateQ{}} that estimates the \emph{long-term value (LTV)} of a slate of items by directly using the estimated LTV of the \emph{individual items on the slate}. This decomposition exploits certain assumptions about the specifics of user choice behavior---i.e., the process by which user preferences dictate selection and/or engagement with items on a slate---but these assumptions are minimal and, we argue below, very natural in many recommender settings. 

More concretely, we first show how the \SlateQ{} decomposition can be incorporated into \emph{temporal difference (TD)} learning algorithms, such as SARSA and Q-learning, so that LTVs can be learned at the level of individual items despite the fact that items are always presented to users in slates. This is critical for both generalization and exploration efficiency. We then turn to the optimization problem required to build slates that maximize LTV, a necessary component of policy improvement (e.g., in Q-learning) at training time and for selecting optimal slates at serving time. Despite the combinatorial (and fractional) nature of the underlying optimization problem, we show that it can be solved in polynomial-time by a two-step reduction to a linear program (LP). We also show that simple top-$k$ and greedy approximations, while having no theoretical guarantees in this formulation, work well in practice.

Our second contribution is methodological. Despite the recent successes of RL afforded by deep Q-networks (DQNs) \cite{mnih2015,silver2016}, the deployment of RL in practical recommenders is hampered by the need to construct relevant state and action features for DQN models, and to train models that serve millions-to-billions of users. In this work, we develop a methodology that allows one to exploit existing myopic recommenders to: (a) accelerate RL model development; (b) reuse existing training infrastructure to a great degree; and (c) reuse the same serving infrastructure for scoring items based on their LTV. Specifically, we show how \emph{temporal difference (TD)} learning can be built on top of existing myopic pipelines to allow the training and serving of DQNs.

Finally, we demonstrate our approach with both offline simulation experiments and online live experiments on the YouTube video recommendation system. We show that our techniques are scalable and offer significant improvements in user engagement over myopic recommendations. The live experiment also demonstrates how our methodology supports the relatively straightforward deployment of TD and RL methods that build on the learning infrastructure of extant myopic systems.

The remainder of the paper is organized as follows. 
In Section~\ref{sec:related}, we briefly discuss related work on the use of RL for recommender systems, choice modeling,  and RL with combinatorial action spaces.
We formulate the LTV slate recommendation problem as a Markov decision process (MDP) in Section~\ref{sec:setup} and briefly discuss standard value-based RL techniques, in particular, SARSA and Q-learning.

We introduce our \SlateQ{} decomposition in Section~\ref{sec:slate-decomposition}, discussing the assumptions under which the decomposition is valid, and how it supports effective TD-learning by allowing the long-term value (Q-value) of a slate to be decomposed into a function of its constituent item-level LTVs (Q-values). We pay special attention to the form of the \emph{user choice model}, i.e., the random process by which a user's preferences determine the selection of an item from a slate. The decomposition
affords \emph{item-level exploration and generalization} for TD methods like SARSA and Q-learning, thus obviating the need to construct value or Q-functions explicitly over slates. For Q-learning itself to be feasible, we must also solve the combinatorial \emph{slate optimization problem}---finding a slate with maximum LTV given the Q-values of individual items. We address this problem in Section~\ref{sec:slateopt}, showing that it can be solved effectively by first developing a fractional mixed-integer programming formulation for slate optimization, then deriving a reformulation and relaxation that allows the problem to be solved exactly as a linear program. We also describe two simple heuristics, \emph{top-$k$} and \emph{greedy} slate construction, that have no theoretical guarantees, but perform well in practice.

To evaluate these methods systematically, we introduce a recommender simulation environment, \emph{RecSim}, in Section~\ref{sec:simulator} that allows the straightforward configuration of an item collection (or vocabulary), a user (latent) state model and a user choice model. We describe specific instantiations of this environment suitable for slate recommendation, and in 
Section~\ref{sec:empiricalSim} we use these models in the empirical evaluation of 
our \SlateQ{} learning and optimization techniques.

The practical application of RL in the estimation of LTV in large-scale, practical recommender systems often requires integration of RL methods with production machine-learning training and serving infrastructure. In Section~\ref{sec:implementation}, we outline a general methodology by which RL methods like \SlateQ{} can be readily incorporated into the typical infrastructure used by many myopic recommender systems. We use this methodology to test the \SlateQ{} approach, specifically using SARSA to get one-step policy improvements, in a live experiment for recommendations on the YouTube homepage. We discuss the results of this experiment in Section~\ref{sec:empiricalLive}.

\section{Related Work}
\label{sec:related}

We briefly review select related work in recommender systems, choice modeling and combinatorial action optimization in RL.

\paragraph{Recommender Systems}


Recommender systems have typically relied on collaborative filtering (CF) techniques \cite{grouplens:cacm97,breese-cf:uai98}. These exploit user feedback on a subset of items (either explicit, e.g., ratings, or implicit, e.g., consumption) to directly estimate user preferences for unseen items. CF techniques include methods that explicitly cluster users and/or items, methods that embed users and items in a low-dimensional representation (e.g., LSA, probabilistic matrix factorization), or combinations of the two \cite{krestel2010tagLDA,moshfeghi2011lda}.

Increasingly, recommender systems have moved beyond explicit preference prediction to capture more nuanced aspects of user behavior, for instance, how they respond to specific recommendations, such as pCTR (predicted click-through rate), degree of engagement (e.g., dwell/watch/listen time), ratings, social behavior (e.g., comments, sharing) and other behaviors of interest. DNNs now play a significant role in such approaches \cite{vandenoord:nips13,wang_etal:kdd15,covington:recsys16,cheng_wide_and_deep:dlrs16} and often use CF-inspired embeddings of users and items as inputs to the DNN itself.

\paragraph{Sequence Models and RL in Recommender Systems}

Attempts to formulate recommendation as a RL problem have been relatively uncommon, though it has attracted more attention recently. Early models included a MDP model for shopping recommendation \cite{shani:jmlr05} and Q-learning for page navigation \cite{taghipour:recsys07}, but were limited to very small-scale settings (100s of items, few thousands of users). More recently, biclustering has been combined with RL algorithms for recommendation \cite{choi2018reinforcement},
while \citet{facebook_horizon:2018} describe the use of RL in several applications at Facebook. \citet{chen_etal:2018top} also explored a novel off-policy policy-gradient approach that is very scalable, and was shown to be effective in a large-scale commercial recommender system. Their approach does not explicitly compute LTV improvements (as we do by developing Q-value models), nor does it model the slate effects that arise is practical recommendations. 

\citet{zhao_slateRL:recsys18} explicitly consider RL in slate-based recommendation systems, developing an actor-critic approach for recommending a page of items and tested using simulator trained on user logs. While similar in motivation to our approach, this method differs in several important dimensions: it makes no significant structural assumptions about user choice, using a CNN to model the spatial layout of items on a page, thus not handling the action-space combinatorics w.r.t.\ generalization, exploration, or optimization (but allowing additional flexibility in capturing user behavior). Finally, the focus of their method is online training and their evaluation with offline data is limited to item reranking. 

\paragraph{Slate Recommendation and Choice Modeling}

Accounting for slates of items in recommender systems is quite common \cite{deshpande:tois04,activecf:uai03,viappiani:nips2010,le:cikm17}, and the extension introduces interesting modeling questions (e.g., involving metrics such as diversity \cite{wilhelm:cikm18}) and computational issues due to the combinatorics of slates themselves. \citet{swaminathan_etal:nips17} explored off-policy evaluation and optimization using inverse propensity scores in the context of slate interactions. \citet{mehrotra:www19} developed a hierarchical model for understanding user satisfaction in slate recommendation.

The construction of optimal recommendation slates generally depends on \emph{user choice behavior}. Models of user choice from sets of items is studied under the banner of \emph{choice modeling} in areas of econometrics, psychology, statistics, operations research and marketing and decision science \cite{luce59,louviere-et-al:statedchoice2000}. Probably the most common model of user choice is the \emph{multinomial logit (MNL)} model and its extensions (e.g., the conditional logit model, the mixed logit model, etc.)---we refer to \citet{louviere-et-al:statedchoice2000} for an overview.

For example, the conditional logit model assumes a set of user-item characteristics (e.g., feature vector) $x_{ij}$ for user $i$ and item $j$, and determines the (random) utility $u(x_{ij})$ of the item for the user. Typically, this model is linear so $u(x_{ij}) = \beta x_{ij}$, though we consider the use of DNN regressors to estimate these logits below.
The probability of the user selecting $j$ from a slate $A$ of items is
\begin{equation}
P(j|A) = \frac{e^{u(x_{ij})}}{\sum_{\ell\in A} e^{u(x_{i\ell})}} \label{eq:conditionalLogit}
\end{equation}
The choice model is justified under specific independence and extreme value assumptions \cite{mcfadden_condlogit:1974,train:discretechoicebook2009}. Various forms of such models are used to model consumer choice and user behavior in wide ranging domains, together with specific methods for model estimation, experiment design and optimization. Such models form the basis of optimization procedures in  revenue management \cite{talluri_vanryzin:ms04,rusmevichientong_robust:or2012},
product line design \cite{chen:mgmtsci2000,schon:mgmtsci2010}, assortment optimization \cite{martinez-roels:pom11,honhon:msom12} and a variety of other areas---we exploit connections with this work in Section~\ref{sec:slateopt} below.

The conditional logit model is an instance of a more general conditional choice format in which a user $i$ selects item $j\in A$ with unnormalized probability $v(x_{ij})$, for some function $v$:
\begin{equation}
P(j|A) = \frac{v(x_{ij})}{\sum_{\ell\in A} v(x_{i\ell})}. \label{eq:conditionalChoice}
\end{equation}
In the case of the conditional logit, $v(x_{ij}) = e^{\tau u(x_{ij})}$, but any arbitrary $v$ can be used.

Within the ML community, including recommender systems and learning-to-rank, other choice models are used to explain user choice behavior. For example, \emph{cascade models} \cite{joachims2002,craswell:wsdm08,kveton_cascading:icml15} have proven popular as a means of explaining user browsing behavior through (ordered) lists of recommendations, search results, etc., and is especially effective at capturing position bias. The standard cascade model assumes that a user $i$ has some affinity (e.g., perceived utility) $u_{ij_k}$ for any item $j_k$; sequentially scans a list of items $A = (j_1, j_2, \ldots, j_K)$ in order; and will select (e.g., click) an item with probability $\phi(u_{ij_k})$ for some non-decreasing function $\phi$.  If an item is selected when inspected, no items following will be inspected/selected; and if the last item is inspected but not selected, then no selection is made. Thus the probability of $j_k$ being selected is:
\begin{equation}
P(j_k | A) = \prod_{\ell < k} (1-\phi(u_{ij_\ell})) \phi(u_{ij_k}). \label{eq:cascade}
\end{equation}
Various mechanisms for model estimation, optimization and exploration have been proposed for the basic cascade model and its variations. Recently, DNN and sequence models have been developed for explaining user choice behavior in a more general, non-parametric fashion \cite{ai2018sigir,seq2slate_full:arxiv18}. As one example, \citet{jiang_slates_cvae:iclr18} proposed a slate-generation model using conditional variational autoencoders to model the distribution of slates conditioned on user response, but the scalability requires the use of a pre-trained item embedding in large domains of the type we consider. However, the CVAE model does offer considerably flexibility in capturing item interactions, position bias, and other slate effects that might impact user response behavior.

\paragraph{RL with Combinatorial Action Spaces}

Designing tractable RL approaches for \emph{combinatorial actions}---of which slates recommendations are an example---is itself quite challenging. Some recent work in recommender systems considered slate-based recommendations (see, e.g., discussion of \citet{zhao_slateRL:recsys18} above, though they do not directly address the combinatorics), though most is more general. \emph{Sequential DQN} \cite{metz2017discrete} decomposes $k$-dimensional actions into a sequence of atomic actions, inserting fictitious states between them so a standard RL method can plan a trajectory giving the optimal action configuration. While demonstrated to be useful in some circumstances, the approach trades off the exponential size of the action space with a corresponding exponential increase in the size of the state space (with fictitious states corresponding to possible sequences of sub-actions).

\citet{sunehag2015deep} proposed \emph{Slate MDPs} which considers slates of \emph{primitive actions}, using DQN to learn the value of item slates, and a greedy procedure to construct slates. In fact, they develop three DQN methods for the problem, two of which manage the combinatorics of slates by assuming the primitive actions can be executed in isolation. In our setting, this amounts to the unrealistic assumption that we could ``force'' a user to consume a specific item (rather than present them with a slate, from which no item might be consumed). Their third approach, \emph{Generic Full Slate}, makes no such assumption, but maintains an explicit $Q$-function over slates of items. This means it fails to address the exploration and generalization problems, and while the greedy optimization (action selection) method used is tractable, it comes with no guarantees of optimality.

\section{An MDP Model for Slate Recommendation}
\label{sec:setup}

In this section, we develop a \emph{Markov decision process (MDP)} model for content recommendation with \emph{slates}. We consider a setting in which a recommender system is charged with presenting a slate to a user, from which the user selects zero or more items for consumption (e.g., listening to selected music tracks, reading content, watching video content). Once items are consumed, the user can return for additional slate recommendations or terminate the session. The user's response to a consumed item may have multiple dimensions. These may include the immediate degree of engagement with the item (e.g., consumption time); ratings feedback or comments; sharing behavior; subsequent engagement with the content provider beyond the recommender system's direct control. In this work, we use \emph{degree of engagement} abstractly as the reward without loss of generality, since it can encompass a variety of metrics, or their combinations.


We focus on \emph{session optimization} to make the discussion concrete, but our decomposition applies equally well to any long-term horizon.\footnote{Dealing with very extended horizons, such as \emph{lifetime value} \cite{theocharous:ijcai15,hallak_mansour_rl_ltv:2017}, is often problematic for any RL method; but such issues are independent of the slate formulation and decomposition we propose.} Session optimization with slates can be modeled as a MDP with states $\calS$, actions $\calA$, reward function $R$ and transition kernel $P$, with discount factor $0 \leq \gamma \leq 1$. 


The states $\calS$ typically reflect \emph{user state}. This includes relatively static user features such as demographics, declared interests, and other user attributes, as well as more dynamic user features, such as user context (e.g., time of day). In particular, summaries of relevant user history and past behavior play a key role, such as past recommendations made to the user; past user responses, such as recommendations accepted or passed on, the specific items consumed, and degree of user engagement with those items. The summarization of history is often domain specific (see our discussion of methodology in Section~\ref{sec:implementation}) and can be viewed as a means of capturing certain aspects of user latent state in a partially observable MDP. The state may also reflect certain general (user-independent) environment variables. We develop our model assuming a finite state space for ease of exposition, though our experiments and our methodology admit both countably infinite and continuous state features.

The action space $\calA$ is simply the set of all possible recommendation slates. We assume a fixed catalog of recommendable items $\calI$, so actions are the subsets $A\subseteq \calI$ such that $|A| = k$, where $k$ is the slate size. We assume that each item $a\in\calI$ and each slate $A$ is recommendable at each state $s$ for ease of exposition. However, our methods apply readily when certain items cannot be recommended at particular states by specifying $\calI_s$ for each $s\in\calS$ and restricting $\calA_s$ to subsets of $\calI_s$. If additional constraints are placed on slates so that $\calA_s$ is a \emph{strict} subset of the slates defined over $\calI_s$, these can be incorporated into the slate optimization problem at both training and serving time.\footnote{We briefly describe where relevant adjustments are needed in our algorithms when we present them. We also note that our methods work equally well when the feasible set of slates $\calA_s$ is stochastic (but stationary) as in \cite{sasmdps:ijcai18}.} We do not account for positional bias or ordering effects within the slate in this work, though such effects can be incorporated into the choice model (see below).

To account for the fact that a user may select no item from a slate, we assume that every slate includes a $(k+1)$st \emph{null item}, denoted $\bot$. This is standard in most choice modeling work and makes it straightforward to specify all user behavior as if induced by a choice from the slate.

Transition probability $P(s'|s,A)$ reflects the probability that the user transitions to state $s'$ when action $A$ is taken at user state $s$. This generally reflects uncertainty in both user response and the future contextual or environmental state. One of the most critical points of uncertainty pertains the probability with which a user will consume a particular recommended item $a\in A$ from the slate. As such, choice models play a critical role in evaluating the quality of a slate as we detail in the next section.

Finally, the reward $R(s,A)$ captures the expected reward of a slate, which measures the expected degree of user engagement with items on the slate. Naturally, this expectation must account for the uncertainty in user response.

Our aim is to find optimal slate recommendation as a function of the state.
A \emph{(stationary, deterministic) policy} $\pi:\calS \rightarrow \calA$ dictates the action $\pi(s)$ to be taken (i.e., slate to recommend) at any state $s$. The \emph{value function} $V^\pi$ of a fixed policy $\pi$ is:
\begin{align}
V^\pi(s) &= R(s,\pi(s)) + \gamma \sum_{s'\in\calS} P(s'|s,\pi(s)) V^\pi(s').
	\label{eq:value}
\end{align}
The corresponding action value, or \emph{Q-function}, reflects the value of
taking an action $a$ at state $s$ and then acting according to $\pi$:
\begin{align}
Q^\pi(s,A) &= R(s,A) + \gamma \sum_{s'\in\calS} P(s'|s,A) V^\pi(s').
	\label{eq:qpivalue}
\end{align}

The \emph{optimal policy} $\pi^\ast$ maximizes expected value $V(s)$ uniformly over $\calS$, and its value---the \emph{optimal value function} $V^\ast$---is given by the fixed point of the Bellman equation:
\begin{align}
V^\ast(s) &= \max_{A\in\calA} R(s,A) + \gamma \sum_{s'\in\calS} P(s'|s,A) V^\ast(s').
	\label{eq:optvalue}
\end{align}
The optimal Q-function is defined similarly:
\begin{align}
Q^\ast(s,A) &= R(s,A) + \gamma \sum_{s'\in\calS} P(s'|s,A) V^\ast(s').
    \label{eq:qvalue}
\end{align}
The optimal policy satisfies $\pi^\ast(s)=\arg\max_{A\in\calA} Q^\ast(s,A)$.

When transition and reward models are both provided, optimal policies and value functions can be computed using a variety of methods \cite{puterman}, though generally these require approximation in large state/action problems \cite{bertsekas:ndp}. With sampled data, RL methods such as TD-learning \cite{sutton:TD},
\emph{SARSA} 
\cite{rummery_sarsa:1994,sutton_sarsa:nips96}
and \emph{Q-learning} \cite{watkins:mlj92} can be used
(see \cite{sutton:rlbook} for an overview).
Assume training data of the form $(s, A, r, s', A')$ representing observed transitions and rewards generated by some policy $\pi$. The Q-function $Q^\pi$ can be estimated using SARSA updates of the form:
\begin{align}
Q^{(t)}(s,A)
       &\leftarrow \alpha^{(t)} [r + \gamma Q^{(t-1)}(s',A')] 
		      + (1 - \alpha^{(t)}) Q^{(t - 1)}(s,A),
		      \label{eq:sarsaupdate} 
\end{align}
where $Q^{(t)}$ represents the $t$th iterative estimate of $Q^\pi$ and $\alpha$ is the learning rate.
SARSA, Eq.~(\ref{eq:sarsaupdate}), is \emph{on-policy} and estimates the value of the data-generating policy $\pi$. However, if the policy has sufficient exploration or other forms of stochasticity (as is common in large recommender systems), acting greedily w.r.t.\ $Q^\pi$, and using the data so-generated to train a new $Q$-function, will implement a policy improvement step \cite{sutton:rlbook}. With repetition---i.e., if
the updated $Q^\pi$ is used to make recommendations (with some exploration), from which new training data is generated---the process will converge to the optimal $Q$-function. Note that acting greedily w.r.t.\ $Q^\pi$ requires the ability to compute optimal slates at serving time. In what follows, we use the term SARSA to refer to the (on-policy) estimation of the Q-function $Q^\pi(s,a)$ of a \emph{fixed} policy $\pi$, i.e., the TD-prediction problem on state-action pairs.\footnote{SARSA is often used to refer to the on-policy \emph{control} method that includes making policy improvement steps. We use it simply to refer to the TD-method based on SARSA updates as in Eq.~(\ref{eq:sarsaupdate}).}

The optimal Q-function $Q^\ast$ can be estimated directly in a related fashion:
\begin{align}
Q^{(t)}(s,A)
       &\leftarrow \alpha^{(t)} [r + \max_{A'}\gamma Q^{(t - 1)}(s', A')] 
		       + (1 - \alpha^{(t)}) Q^{(t - 1)}(s, A).
		      \label{eq:qupdate}
\end{align}
where $Q^{(t)}$ represents the $t$th iterative estimate of $Q^\ast$.
Q-learning, Eq.~(\ref{eq:qupdate}), is \emph{off-policy} and directly estimates the optimal Q-function (again, assuming suitable randomness in the data-generating policy $\pi$). Unlike SARSA, Q-learning requires that one compute optimal slates $A'$ at training time, not just at serving time.

\section{\SlateQ: Slate Decomposition for RL}
\label{sec:slate-decomposition}

One key challenge in the formulation above is the combinatorial nature of the action space, consisting of all $\binom{|\calI|}{k}\cdot k!$ ordered $k$-sets over $\calI$. This poses three key difficulties for RL methods.
First, the sheer size of the action space makes sufficient \emph{exploration} impractical. It will generally be impossible to execute all slates even once at any particular state, let alone satisfy the sample complexity requirements of TD-methods. Second, \emph{generalization} of Q-values across slates is challenging without some compressed representation. While a slate could be represented as the collection of features of its constituent items, this imposes greater demands on sample complexity; we may further desire greater generalization capabilities. Third, we must solve the combinatorial optimization problem of finding a slate with maximum Q-value---this is a fundamental part of Q-learning and a necessary component in any form of policy improvement. Without significant structural assumptions or approximations, such optimization cannot meet the real-time latency requirements of production recommender systems (often on the order of tens of milliseconds).

In this section, we develop \emph{SlateQ}, a model that allows the Q-value of a slate to be \emph{decomposed into a combination of the item-wise Q-values of its constituent items}. This decomposition exposes precisely the type of structure needed to allow effective exploration, generalization and optimization. We focus on the \SlateQ{} decomposition in this section---the decomposition itself immediately resolves the exploration and generalization concerns. We defer discussion of the optimization question to Section~\ref{sec:slateopt}.

Our approach depends to some extent on the nature of the user choice model, but critically on the interaction it has with subsequent user behavior, specifically, how it influences both expected engagement (i.e., reward) and user latent state (i.e., state transition probabilities). We require
two assumptions to derive the \SlateQ{} decomposition.
\begin{itemize} 
\item \textbf{Single choice (SC):} A user consumes a \emph{single} item from each slate
(which may be the \emph{null item} $\bot$).
\item \textbf{Reward/transition dependence on selection (RTDS):} The realized
reward (user engagement) $R(s,A)$ depends (perhaps stochastically) \emph{only on the item $i\in A$ consumed by the user} (which may also be the \emph{null item} $\bot$). Similarly, the state transition $P(s'|s,A)$
depends only on the consumed $i\in A$.
\end{itemize}
Assumption \scassm\ implies that the user selection of a subset $B\subseteq A$ from slate $A$ has $P(B|s,A) > 0$ only if $|B| = 1$. While potentially limiting in some settings, in our application (see Section~\ref{sec:empiricalLive}), users can consume only one content item at a time. Returning to the slate for a second item is modeled and logged as a separate event, with the user making a selection in a new state that reflects engagement with the previously selected item. As such, \scassm\ is valid in our setting.\footnote{Domains in which the user can select multiple items without first engaging with them (i.e., without induced some change in state) would be more accurately modeled by allowing multiple selection. Our \SlateQ{} model can be extended to incorporate a simple correction term to accurately model user selection of multiple items by assuming conditional independence of item-choice probabilities given $A$.} Letting $R(s,A,i)$ denote the reward when a user in state $s$, presented with slate $A$, selects item $i\in A$, and 
$P(s'|s,A,i)$ the corresponding probability of a transition to $s'$,
the \scassm\ assumption allows us to express immediate rewards and state transitions as follows:
\begin{align}
R(s,A) &= \sum_{i\in A} P(i|s,A) R(s,A,i), \label{eq:exprew} \\
P(s'|s,A) &= \sum_{i\in A} P(i|s,A) P(s'|s,A,i). \label{eq:exptrans}
\end{align}

The \rtassm\ assumption is also realistic in many recommender systems, especially with respect to immediate reward. It is typically the case that a user's engagement with a selected item is not influenced to a great degree by the options in the slate that were not selected.
The transition assumption also holds in recommender systems where direct user interaction with
items drives user utility, overall satisfaction, new interests, etc., and hence is the primary determinant of the user's underlying latent state. Of course, in some recommender domains, unconsumed items in the slate (say, impressions of content descriptions, thumbnails, clips, etc) may themselves create, say, future curiosity, which should be reflected by changes in the user's latent state. 
But even in such cases \rtassm\ may be treated as a reasonable simplifying assumption, especially where such impressions have significantly less impact on the user than consumed items themselves.
The \rtassm\ assumption can be stated as:
\begin{align}
R(s,A,i) &= R(s,A',i) = R(s,i), \quad \forall A, A' \textrm{ containing } i, \label{eq:decomprew} \\
P(s'|s,A,i) &= P(s'|s,A',i) = P(s'|s,i) , \quad \forall A, A' \textrm{ containing } i. \label{eq:decomptrans}
\end{align}

Our decomposition of (on-policy) Q-functions for a fixed data-generating policy $\pi$ relies on an \emph{item-wise auxiliary function} $\qbar^\pi(s,i)$, which represents the LTV of a user consuming an item $i$, i.e., the LTV of $i$ conditional on it being clicked. Under \rtassm, this function is independent of the slate $A$ from which $i$ was selected. We define:
\begin{equation}
\qbar^\pi(s,i) = R(s, i)
	  + \gamma \sum_{s'\in\calS} P(s'|s,i) V^\pi(s'). \label{eq:qpibar} 
\end{equation}
Incorporating the \scassm\ assumption, we immediately have:
\begin{prop}
\label{prop:qPiUpdate}
$Q^\pi(s,A) = \sum_{i\in A} P(i|s,A) \qbar^\pi(s,i).$
\end{prop}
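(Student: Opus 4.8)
The plan is to prove the identity by a direct chain of substitutions, using in turn the definition of the on-policy Q-function, the two decompositions of reward and transition implied by \scassm, and finally the slate-independence guaranteed by \rtassm. First I would start from the defining equation for $Q^\pi$, Eq.~(\ref{eq:qpivalue}):
\[
Q^\pi(s,A) = R(s,A) + \gamma \sum_{s'\in\calS} P(s'|s,A) V^\pi(s').
\]
Next I would substitute the \scassm-based expansions of the one-step reward and of the transition kernel, Eqs.~(\ref{eq:exprew}) and~(\ref{eq:exptrans}), so that both terms become sums over items $i\in A$ (including $\bot$) weighted by the choice probabilities $P(i|s,A)$.

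Having done that, I would interchange the (finite) sums over $s'\in\calS$ and over $i\in A$ in the discounted term, pull the common factor $P(i|s,A)$ out, and collect everything under a single sum over $i\in A$:
\[
Q^\pi(s,A) = \sum_{i\in A} P(i|s,A)\Bigl( R(s,A,i) + \gamma \sum_{s'\in\calS} P(s'|s,A,i)\, V^\pi(s') \Bigr).
\]
Then I would invoke the \rtassm\ assumption, Eqs.~(\ref{eq:decomprew}) and~(\ref{eq:decomptrans}), to replace $R(s,A,i)$ by $R(s,i)$ and $P(s'|s,A,i)$ by $P(s'|s,i)$ inside the parentheses, removing all dependence on $A$. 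The bracketed quantity is then precisely the definition of $\qbar^\pi(s,i)$ in Eq.~(\ref{eq:qpibar}), giving $Q^\pi(s,A) = \sum_{i\in A} P(i|s,A)\,\qbar^\pi(s,i)$.

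I do not expect a genuine obstacle: the argument is a mechanical substitution, and the statement ``immediately'' in the text reflects this. The only points deserving a moment's care are (a) keeping the null item $\bot$ in every sum over $i\in A$ throughout, which is consistent because both \scassm\ and \rtassm\ are stated to cover $\bot$, and (b) justifying the exchange of the order of summation, which is trivial here under the finite-state-space assumption and would require only routine absolute-convergence bookkeeping in the countable or continuous extensions noted in Section~\ref{sec:setup}.
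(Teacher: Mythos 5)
Your proposal is correct and follows essentially the same chain of substitutions as the paper's own proof: expand $Q^\pi$ via its definition, apply the \scassm{} expansions of reward and transition, exchange the finite sums, invoke \rtassm{} to drop the slate dependence, and recognize the definition of $\qbar^\pi$. The only difference is that you separate the \scassm{} and \rtassm{} steps explicitly, where the paper applies them in a single line; this is a presentational refinement, not a different argument.
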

\begin{proof}
This holds since:
%
%
\begin{align}
Q^\pi(s,A) &= R(s,A) + \gamma \sum_{s'\in\calS} P(s'|s,A) V^\pi(s') \label{eq:qslate}\\ 
	   &=  \sum_{i\in A} P(i|s, A) R(s, i) + \gamma \sum_{i\in A} P(i|s,A) 
	           \sum_{s'\in\calS} P(s'|s,i) V^\pi(s') \label{eq:decompProof1}\\ 
	   &=  \sum_{i\in A}\! P(i|s,A) [R(s, i)
	   + \gamma \sum_{s'\in\calS} P(s'|s,i) V^\pi\!(s')]\\ 
	   &=  \sum_{i\in A} P(i|s,A) \qbar^\pi(s,i). \label{eq:qPiSlatefinal}
\end{align}
Here Eq.~(\ref{eq:decompProof1}) follows immediately from \scassm\ and \rtassm\ (see
Eqs.~(\ref{eq:exprew}, \ref{eq:exptrans}, \ref{eq:decomprew}, \ref{eq:decomptrans})) and
Eq.~(\ref{eq:qPiSlatefinal}) follows from the definition of $\qbar^\pi$ (see Eq.~\ref{eq:qpibar}).
\end{proof}

This simple result gives a \emph{complete decomposition} of slate Q-values into Q-values for individual items. Thus, the combinatorial challenges disappear if we can learn $\qbar^\pi(s,i)$ using TD methods. Notice also that the decomposition exploits the existence of a known choice function. But apart from knowing it (and using it our Q-updates that follow), we make no particular assumptions about the choice model apart from $\scassm$. We note that learning choice models from user selection data is generally quite routine. We discuss specific choice functions in the next section and how they can be exploited in optimization.

TD-learning of the function $\qbar^\pi$ can be realized using a very simple Q-update rule. Given a consumed item $i$ at $s$ with observed reward $r$, a transition to $s'$, and selection of slate $\pi(s') = A'$, we update $\qbar^\pi$ as follows:
%
\begin{equation}
\qbar^\pi(s,i) \leftarrow \alpha (r + \gamma \sum_{j\in A'}
               P(j|s',A') \qbar^\pi(s',j))  
                + (1 -\alpha) \qbar^\pi(s,i).  \label{eq:slatesarsaupdate}
\end{equation}
The soundness of this update follows immediately from Eq.~\ref{eq:qpibar}.

Our decomposed \SlateQ{} update facilitates more compact Q-value models, \emph{using items as action inputs rather than slates}. This in turn allows for greater generalization and data efficiency. Critically, while \SlateQ{} learns item-level Q-values, it can be shown to converge to the correct \emph{slate Q-values} under standard assumptions:
\begin{prop}
\label{prop:qPiconverge}
Under standard assumptions on learning rate schedules and state-action exploration
\citep{sutton:TD,dayan:mlj92,sutton:rlbook}, and
the assumptions on user choice probabilities, state transitions, and rewards
stated in the text above, \SlateQ{}---using update~(\ref{eq:slatesarsaupdate}) and definition of slate
value~(\ref{eq:qPiSlatefinal})---will converge to the true slate Q-function $Q^\pi(s,A)$ and
support greedy policy improvement of $\pi$.
\end{prop}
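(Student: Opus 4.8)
The plan is to recognize update~(\ref{eq:slatesarsaupdate}) as a Robbins--Monro stochastic approximation scheme for a Bellman-type fixed-point equation on the \emph{item-level} function $\qbar^\pi$, and then to invoke a standard asynchronous stochastic-approximation (TD) convergence theorem. First I would make the fixed-point equation self-contained in $\qbar^\pi$ alone: applying Proposition~\ref{prop:qPiUpdate} with $A=\pi(s')$ (or, for a stochastic data-generating policy, taking the expectation over $A'\sim\pi(\cdot\,|\,s')$) gives $V^\pi(s')=\E_{A'\sim\pi(s')}\sum_{j\in A'}P(j|s',A')\,\qbar^\pi(s',j)$; substituting this into definition~(\ref{eq:qpibar}) exhibits $\qbar^\pi$ as the fixed point of the affine operator
\[
(\mathcal{T}^\pi \qbar)(s,i) \;=\; R(s,i) \;+\; \gamma \sum_{s'\in\calS} P(s'|s,i)\, \E_{A'\sim\pi(s')}\!\Big[\textstyle\sum_{j\in A'} P(j|s',A')\,\qbar(s',j)\Big].
\]

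Second I would verify that $\mathcal{T}^\pi$ is a $\gamma$-contraction in $\|\cdot\|_\infty$. For every $s'$ and every feasible next slate $A'$ the choice probabilities form a genuine distribution, $\sum_{j\in A'}P(j|s',A')=1$ (the null item $\bot\in A'$ is exactly what guarantees the weights sum to one), so $\qbar\mapsto\sum_{j\in A'}P(j|s',A')\qbar(s',j)$ is a convex average and hence non-expansive; composing with the expectations over $P(s'|s,i)$ and over $\pi(\cdot|s')$ preserves non-expansiveness, and the leading factor $\gamma$ then gives the contraction (when $\gamma<1$). Thus $\mathcal{T}^\pi$ has a unique fixed point, which by construction is $\qbar^\pi$.

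Third, I would observe that update~(\ref{eq:slatesarsaupdate}) is precisely the asynchronous iteration $\qbar^{(t)}(s,i)\leftarrow(1-\alpha^{(t)})\qbar^{(t-1)}(s,i)+\alpha^{(t)}Y_t$, where the sampled target $Y_t=r+\gamma\sum_{j\in A'}P(j|s',A')\qbar^{(t-1)}(s',j)$ satisfies $\E[Y_t\mid\mathcal{F}_{t-1}]=(\mathcal{T}^\pi\qbar^{(t-1)})(s,i)$. This conditional-mean identity is where assumptions \scassm\ and \rtassm\ enter: by \scassm\ and \rtassm\ the observed reward $r$ and next state $s'$ depend only on the consumed item $i$, so $r$ is unbiased for $R(s,i)$ and $s'\sim P(s'|s,i)$, while $A'$ is drawn by the behavior policy $\pi$; the bootstrap sum is already the exact conditional expectation over the user's choice from $A'$, so the noise $Y_t-\E[Y_t\mid\mathcal{F}_{t-1}]$ has zero mean and, with bounded rewards, variance bounded uniformly in $t$ once the iterates are known to stay bounded (which follows from the $\gamma$-contraction by the standard argument). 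Under the stated learning-rate conditions ($\sum_t\alpha^{(t)}=\infty$, $\sum_t(\alpha^{(t)})^2<\infty$ on each pair) and the hypothesis that every state--item pair is updated infinitely often---guaranteed by the exploration assumption together with \scassm---the asynchronous stochastic-approximation theorem for contractions (\citet{dayan:mlj92}; see also \citet{sutton:rlbook}) yields $\qbar^{(t)}\to\qbar^\pi$ almost surely. Composing with the known choice model via Proposition~\ref{prop:qPiUpdate} then gives $\sum_{i\in A}P(i|s,A)\qbar^{(t)}(s,i)\to\sum_{i\in A}P(i|s,A)\qbar^\pi(s,i)=Q^\pi(s,A)$ for every slate $A$ (continuity of a finite linear combination); and with $Q^\pi$ in hand the classical policy-improvement theorem guarantees $Q^{\pi'}\ge Q^\pi$ pointwise for $\pi'(s)\in\argmax_{A}Q^\pi(s,A)$, the $\argmax$ being the slate-optimization problem solved in Section~\ref{sec:slateopt}.

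I expect the main obstacle to be the familiar on-policy/exploration tension rather than the contraction or stochastic-approximation mechanics: for $\E[Y_t\mid\mathcal{F}_{t-1}]$ to equal $(\mathcal{T}^\pi\qbar^{(t-1)})(s,i)$ the bootstrap slate $A'$ must be sampled from the \emph{same} policy $\pi$ whose value we claim to recover, yet the infinitely-often-visitation requirement forces $\pi$ to carry residual stochasticity. The honest reading is therefore that \SlateQ\ converges to $Q^\pi$ for the (exploratory) behavior policy $\pi$, and the ``greedy policy improvement'' clause is the approximate/optimistic-policy-iteration statement already flagged in the text---acting greedily w.r.t.\ the recovered $Q^\pi$ (with continued exploration) produces a policy that weakly dominates $\pi$, and iterating drives the process toward $Q^\ast$. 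The only other point requiring explicit care is the boundedness of the iterates $\qbar^{(t)}$ needed for the variance bound, which I would dispatch by the standard induction using the $\gamma$-contraction and bounded rewards.
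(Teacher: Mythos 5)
Your proposal is correct and follows essentially the same route as the paper, whose own proof is only a brief sketch deferring to standard TD(0)/expected-SARSA convergence with the explicit expectation over user choices; you simply spell out the contraction and stochastic-approximation details that the paper leaves implicit. The one nuance the paper's sketch makes explicit that you gloss over slightly is that infinitely-often updating of each $(s,i)$ pair requires not just slate-level exploration but also that the choice model assign non-zero selection probability to every item (or that exploratory slates be configured to ensure each item is selected sufficiently often)---\scassm{} alone does not guarantee this.
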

\begin{proof}
\emph{(Brief sketch.)} Standard proofs of convergence of TD(0), applied to the state-action Q-function $Q^\pi$ apply directly, with the exception of the introduction of the direct \emph{expectation} over user choices, i.e., $\sum_{j\in A'} P(j|s',A')$, rather than the use of sampled choices.\footnote{We note that sampled choice could also be used in the full on-policy setting, but is problematic for optimization/action maximization as we discuss below.} However, it is straightforward to show that incorporating the explicit expectation does not impact the convergence of TD(0) (see, for example, the analysis of \emph{expected SARSA} \cite{vanseijen_exp_sarsa:adprl09}). There is some additional impact of user choice on exploration policies as well---if the choice model is such that some item $j$ has choice probability $P(j|s,A) = 0$ for \emph{any slates $A$ with $\pi(s) > 0$} in some state $s$, we will not experience user selection of item $j$ at state $s$ under $\pi$ (for value prediction of $V^\pi$ this is not problematic, but it is for learning a $Q^\pi$). Thus the exploration policy must account for the choice model, either by sampling all slates at each state (which is very inefficient), or by configuring exploratory slates that ensure each item $j$ is sampled sufficiently often. For most common choice models (see discussion below), every item has non-zero probability of selection, in which case, standard action-level exploration conditions apply. 
\end{proof}

Notice that update~(\ref{eq:slatesarsaupdate}) requires the use of a known choice model. Such choice models are quite commonly learned in ML-based recommender systems, as we discuss further below in Section~\ref{sec:implementation}. The introduction of this expectation---rather than relying on sampled user choices---can be viewed as reducing variance in the estimates much like \emph{expected SARSA}, as discussed by
\citet{sutton:rlbook} and analyzed formally by \citet{vanseijen_exp_sarsa:adprl09}. Furthermore, it is straightforward to show that the standard SARSA(0) algorithm (with policy improvement steps) will converge to the optimal Q-function, subject to the considerations mentioned above, using standard techniques \cite{singh-jaak-littman-sze:ml98,vanseijen_exp_sarsa:adprl09}.

The decomposition can be applied to Q-learning of the optimal Q-function as well,
requiring only
a straightforward modification of Eq.~(\ref{eq:qpibar}) to obtain $\qbar(s,i)$, the optimal (off-policy) conditional-on-click item-wise Q-function, specifically, replacing $V^\pi(s')$ with $V^\ast(s')$
(the proof is analogous to that of Proposition~\ref{prop:qPiUpdate}):
\begin{prop}
\label{prop:qStardecomp}
$Q(s,A) = \sum_{i\in A} P(i|s,A) \qbar(s,i).$
\end{prop}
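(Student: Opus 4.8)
The plan is to reuse the argument establishing Proposition~\ref{prop:qPiUpdate} essentially verbatim, since that derivation never invoked any property specific to the on-policy value function $V^\pi$; it relied only on the structural assumptions \scassm\ and \rtassm\ that govern how the realized reward and the state transition depend on the single \emph{consumed} item. The only change needed is to replace $V^\pi$ by the optimal value function $V^\ast$ throughout, and correspondingly $\qbar^\pi$ by the optimal conditional-on-click item-wise function $\qbar(s,i) = R(s,i) + \gamma \sum_{s'\in\calS} P(s'|s,i) V^\ast(s')$.

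Concretely, I would start from the Bellman definition of the optimal slate Q-function, Eq.~(\ref{eq:qvalue}), namely $Q(s,A) = R(s,A) + \gamma \sum_{s'\in\calS} P(s'|s,A) V^\ast(s')$. I would then apply \scassm\ together with \rtassm\ to rewrite $R(s,A) = \sum_{i\in A} P(i|s,A) R(s,i)$ and $P(s'|s,A) = \sum_{i\in A} P(i|s,A) P(s'|s,i)$, exactly as in Eqs.~(\ref{eq:exprew}, \ref{eq:exptrans}, \ref{eq:decomprew}, \ref{eq:decomptrans}). Substituting these into the Bellman expression and interchanging the two finite sums (over $i\in A$ and over $s'\in\calS$) yields $Q(s,A) = \sum_{i\in A} P(i|s,A) \bigl[ R(s,i) + \gamma \sum_{s'\in\calS} P(s'|s,i) V^\ast(s') \bigr]$, and the bracketed quantity is precisely $\qbar(s,i)$ by its definition. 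This is the claimed identity.

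There is no genuine obstacle: the result comes ``for free'' once one notices that the algebra of Proposition~\ref{prop:qPiUpdate} is agnostic to which value function appears inside the discounted continuation term. The one point deserving a sentence of care is simply that here $V^\ast$ is the fixed point of the Bellman optimality equation~(\ref{eq:optvalue}) (so $\qbar(s,i)$ inherits its well-definedness from $V^\ast$), and hence the decomposition is a statement about the \emph{optimal} Q-function rather than the Q-function of a fixed policy. For completeness I would also note the companion identity $V^\ast(s) = \max_{A\in\calA} \sum_{i\in A} P(i|s,A)\qbar(s,i)$, obtained by combining the proposition with Eq.~(\ref{eq:optvalue}); this is the form actually exploited by the decomposed Q-learning update and by the slate-optimization problem of Section~\ref{sec:slateopt}, though it is not needed for the stated claim.
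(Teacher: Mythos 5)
Your proposal is correct and matches the paper exactly: the paper gives no separate derivation for this proposition, stating only that one replaces $V^\pi$ with $V^\ast$ in the definition of $\qbar$ and that the proof is analogous to that of Proposition~\ref{prop:qPiUpdate}, which is precisely the substitution-and-rearrangement argument you carry out. Your added remark about the companion identity for $V^\ast$ is a harmless (and accurate) bonus beyond what the paper records.
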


Likewise, extending the decomposed update Eq.~(\ref{eq:slatesarsaupdate}) to full Q-learning requires only that we introduce the usual maximization:
%
%
\begin{equation}
\qbar\!(s,i) \leftarrow \alpha (r + \gamma \max_{A'\in\calA} \sum_{j\in A'}
               P(j|s',A') \qbar(s',j))  
                + (1 -\alpha) \qbar(s,i). \label{eq:slatequpdate}
\end{equation}
As above, it is not hard to show that Q-learning using this update will converge, using standard techniques \cite{watkins:mlj92,vanseijen_exp_sarsa:adprl09} and with similar considerations to those discussed in the proof sketch of Proposition~\ref{prop:qPiconverge}:
\begin{prop}
\label{prop:qStarconverge}
Under standard assumptions on learning rate schedules and sufficient exploration
\cite{sutton:rlbook}, and
the assumptions on user choice probabilities, state transitions, and rewards
stated in the text above, \SlateQ{}---using update~(\ref{eq:slatequpdate}) and definition of slate
value in Proposition~\ref{prop:qStardecomp}---will converge to the optimal
slate Q-function $Q^\ast(s,A)$.
\end{prop}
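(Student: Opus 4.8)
The plan is to reduce the claim to a standard stochastic-approximation convergence theorem for a sup-norm contraction, paralleling the sketch for Proposition~\ref{prop:qPiconverge} but with the Bellman optimality operator replacing the policy-evaluation operator. First I would introduce the \emph{item-wise Bellman optimality operator} $\mathcal{T}$ acting on item-level functions $\qbar\colon\calS\times\calI\to\mathbb{R}$ by
\[
(\mathcal{T}\qbar)(s,i) \;=\; R(s,i) \;+\; \gamma\sum_{s'\in\calS}P(s'|s,i)\,\max_{A'\in\calA}\sum_{j\in A'}P(j|s',A')\,\qbar(s',j).
\]
Under \scassm\ and \rtassm\ this is exactly the expression whose sampled version appears on the right-hand side of update~(\ref{eq:slatequpdate}); and by the reasoning behind Proposition~\ref{prop:qStardecomp}, the fixed-point equation $\qbar=\mathcal{T}\qbar$ is precisely the item-wise analogue of the Bellman optimality equation, whose solution $\qbar$ induces $Q^\ast(s,A)=\sum_{i\in A}P(i|s,A)\qbar(s,i)$ and $V^\ast(s')=\max_{A'}\sum_{j\in A'}P(j|s',A')\qbar(s',j)$.

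Second I would verify that $\mathcal{T}$ is a $\gamma$-contraction in $\|\cdot\|_\infty$ over $\calS\times\calI$; this is where the only departure from ordinary Q-learning lies, namely the inner slate-maximization term $M(s';\qbar):=\max_{A'\in\calA}\sum_{j\in A'}P(j|s',A')\qbar(s',j)$. For each fixed $A'$, the map $\qbar\mapsto\sum_{j\in A'}P(j|s',A')\qbar(s',j)$ is a convex combination (the coefficients are non-negative and, since every slate contains the null item $\bot$, sum to one), hence non-expansive; a pointwise maximum of non-expansive maps is non-expansive, so $|M(s';\qbar)-M(s';\qbar')|\le\|\qbar-\qbar'\|_\infty$. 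Composing with $\gamma\sum_{s'}P(s'|s,i)(\cdot)$ yields the $\gamma$-contraction, and Banach's fixed-point theorem gives uniqueness of the fixed point identified above. I would then cast update~(\ref{eq:slatequpdate}) in canonical form $\qbar^{(t)}(s,i)\leftarrow(1-\alpha^{(t)})\qbar^{(t-1)}(s,i)+\alpha^{(t)}\big((\mathcal{T}\qbar^{(t-1)})(s,i)+w^{(t)}\big)$, where $w^{(t)}$ is the zero-mean noise from sampling $r$ and $s'$ (the expectation over the user's choice at $s'$ is taken explicitly, as in expected SARSA, so no choice-sampling noise enters). With bounded rewards (hence conditionally mean-zero, bounded-variance noise), Robbins--Monro step sizes, and the contraction property, the asynchronous stochastic-approximation convergence results \cite{watkins:mlj92,dayan:mlj92,vanseijen_exp_sarsa:adprl09} give $\qbar^{(t)}\to\qbar$ with probability one, provided every pair $(s,i)$ is updated infinitely often. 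Finally, since $Q^\ast(s,A)=\sum_{i\in A}P(i|s,A)\qbar(s,i)$ is a fixed linear (hence continuous) functional of $\qbar$, convergence of the item-level estimates transfers directly to convergence of the induced slate Q-values.

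The main obstacle is less a technical gap than a caveat that must be stated carefully, and it is the same one flagged in the proof of Proposition~\ref{prop:qPiconverge}: ``updating $(s,i)$ infinitely often'' requires the user to actually \emph{consume} item $i$ at state $s$ infinitely often, i.e.\ the behavior policy must present, infinitely often, some slate $A\ni i$ with $P(i|s,A)>0$. For the common choice models discussed below (conditional logit, and any conditional-choice model~(\ref{eq:conditionalChoice}) with $v>0$), every item has strictly positive selection probability in every slate containing it, so ordinary action-level exploration suffices and the usual Q-learning conditions carry over verbatim; for degenerate choice models one would instead have to design exploratory slates covering every item sufficiently often. A secondary, purely computational point is that evaluating the target in~(\ref{eq:slatequpdate}) requires solving $\max_{A'}\sum_{j\in A'}P(j|s',A')\qbar(s',j)$ at training time; this is orthogonal to the convergence argument and is addressed in Section~\ref{sec:slateopt}.
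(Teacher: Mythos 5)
Your proof is correct and follows essentially the same route the paper takes: the paper offers no detailed argument for this proposition, simply asserting convergence ``using standard techniques'' (citing Watkins--Dayan and expected SARSA) together with the same exploration considerations raised in the sketch of Proposition~\ref{prop:qPiconverge}. You have merely made explicit the two ingredients the paper leaves implicit---the $\gamma$-contraction of the item-wise optimality operator (via the observation that the slate-maximization of convex combinations over $A'\ni\bot$ is non-expansive in sup norm) and the requirement that each $(s,i)$ pair be updated, i.e.\ consumed, infinitely often---and both steps are sound.
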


The decomposition of both the policy-based and optimal Q-functions above accomplishes two of our three desiderata: it circumvents the natural combinatorics of both exploration and generalization.
But we still face the combinatorics of action maximization: the \emph{LTV slate optimization problem} is the combinatorial optimization problem of selecting the optimal slate from $\calA$, the space of all $\binom{|\calI|}{k} k!$ possible (ordered) $k$-sets over $\calI$. This is required during training with Q-learning (Eq.~(\ref{eq:qupdate})) and when engaging in policy improvement using SARSA. One also needs to solve the slate optimization problem at serving time when executing the induced greedy policy (i.e., presenting slates with maximal LTV to users given a learned Q-function).
In the next section, we show that exact optimization is tractable and also develop several heuristic approaches to tackling this problem.




\section{Slate Optimization with Q-values}
\label{sec:slateopt}

We address the LTV slate optimization in this section. We develop
an exact linear programming formulation of the problem in Section~\ref{sec:exactopt}
using (a generalization of) the conditional logit model.
In Section~\ref{sec:greedyopt}, we describe two computationally simpler
heuristics for the problem, the top-$k$ and greedy algorithms.

\subsection{Exact Optimization}
\label{sec:exactopt}

We formulate the \emph{LTV slate optimization problem} as follows:
\begin{align}
\max_{\substack{A\subseteq\calI\\ |A| = k}}
   & \sum_{i\in A} P(i|s,A) \qbar(s,i).
	\label{eq:objective}
\end{align}
Intuitively, a user makes her choice from the slate
based on the perceived properties (e.g., attractiveness, quality, topic, utility) of the 
constituent items. In the LTV slate optimization problem, we value the selection of an item from the slate based on its LTV or $\qbar$-value, rather than its immediate appeal to the user. As discussed above, we assume access to the choice model $P(i|s,A)$, since models (e.g., pCTR models) predicting user selection from a slate are commonly used in myopic recommenders. Of course, the computational solution of the slate optimization problem depends on the form of the choice model. We discuss the use of the \emph{conditional logit model (CLM)} in \SlateQ{} (and the more general format, Eq.~(\ref{eq:conditionalChoice})) in this subsection.

When using the conditional logit model (see Eq.~\ref{eq:conditionalLogit}), the LTV slate optimization problem is analogous in a formal sense to,
\emph{assortment optimization} or \emph{product line design} \cite{chen:mgmtsci2000,schon:mgmtsci2010,rusmevichientong_robust:or2012}, in which a retailer designs or stocks a
set of $k$ products whose expected revenue or profit is maximized assuming that
consumers select products based on their appeal (and not their value to the
retailer).\footnote{Naturally, there are more complex variants of assortment optimization, including the choice of price, inclusion of fixed production or inventory costs, etc. There are other conceptual differences with our model as well. While not a formal requirement, LTV of an item in our setting reflects user engagement, hence reflects some form of user satisfaction as opposed to direct value to the recommender. In addition, many assortment optimization models are designed for consumer populations, hence choice probabilities are often reflective of diversity in the population (though random selection by individual consumers is sometimes considered as well; by contrast, in the recommender setting, choice probabilities are usually dependent on
the features of individual users and typically reflect the recommender's uncertainty about a user's immediate intent or context.}

Our optimization formulation is suited to any general conditional choice model of the form Eq.~(\ref{eq:conditionalChoice}) (of which the conditional logit is an instance).\footnote{We note that since the ordering of items within a slate does not impact choice probabilities in this model, the action (or slate) space consists of the $\binom{\calI}{k}$ \emph{unordered} $k$-sets in this case.} We assume a user in state $s$ selects item $i\in A$ with unnormalized probability $v(s,i)$, for some function $v$. In the case of the conditional logit, $v(s,i) = e^{\tau u(s,i)}$. We can express the optimization Eq.~(\ref{eq:objective}) w.r.t.\ such a $v$ as a fractional mixed-integer program (MIP), with binary variables $x_i\in\{0,1\}$ for each item $i\in\calI$ indicating whether $i$ occurs in slate $A$:
\begin{align}
\textrm{max\ }
   &\sum_{i\in\calI} \frac{x_i v(s,i) \qbar(s,i)}{v(s,\bot) + \sum_j x_j v(s,j)}
	\label{eq:MIPobjective}\\
\textrm{s.t.\ }&\sum_{i\in\calI} x_i = k; \quad x_i \in \{0,1\},\; \forall i\in\calI.
\end{align}
This is a variant of a classic product-line (or assortment) optimization problem \cite{chen:mgmtsci2000,schon:mgmtsci2010}. Our problem is somewhat simpler since there are no fixed resource costs or per-item costs.

\citet{chen:mgmtsci2000} show that that the binary indicators in this MIP can be relaxed to obtain the following fractional linear program (LP):
\begin{align}
\textrm{max\ }
   &\sum_{i\in\calI} \frac{x_i v(s,i) \qbar(s,i)}{v(s,\bot) + \sum_j x_j v(s,j)}
	\label{eq:FRACobjective}\\
\textrm{s.t.\ }&\sum_{i\in\calI} x_i = k; \quad 0\leq x_i \leq 1,\; \forall i\in\calI.
\end{align}
The constraint matrix in this relaxed problem is totally unimodular, so the optimal (vertex) solution is integral and standard non-linear optimization methods can be used.  However, since it is a fractional LP, it is directly amenable to the Charnes-Cooper \scite{charnes-cooper:1962} transformation and can be recast directly as a (non-fractional) LP. To do so, we introduce an additional variable $t$ that implicitly represents the \emph{inverse} choice weight of the selected items $t=(v(s,\bot) + \sum_j x_j v(s,j))^{-1}$, and auxiliary variables $y_i$ that represent the products $x_i\cdot (v(s,\bot) + \sum_j x_j v(s,j))^{-1}$, giving the following LP:
\begin{align}
\textrm{max\ }
   &\sum_i y_i v(s,i) \qbar(s,i)
	\label{eq:LPobjective}\\
\textrm{s.t.\ }&\ t v(s,\bot) + \sum_i y_i v(s,i) = 1\\
            & t \geq 0; \quad  \sum_i y_i \leq k t.
\end{align}

The optimal solution $(\bfy^\ast, t^\ast)$ to this LP yields the optimal $x_i$ assignment in the fractional LP Eq.~(\ref{eq:FRACobjective}) via $x_i = y^\ast_i/t^\ast$, which in turn gives the optimal slate in the original fractional MIP Eq.~(\ref{eq:MIPobjective})---just add any item to the slate where $y^\ast_i > 0$. This formulation applies equally well to the MNL model, or related random utility models. 
The slate optimization problem is now immediately proven to be polynomial-time solvable.
\begin{obs}
The LTV slate optimization problem Eq.~\ref{eq:objective}, under the general conditional choice model (including the conditional logit model), is solvable in polynomial-time in the number of items $|\calI|$ (assuming a fixed slate size $k$).
\end{obs}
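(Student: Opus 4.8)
The plan is to string together the three reformulations displayed in the text — combinatorial objective $\to$ fractional MIP $\to$ fractional LP relaxation $\to$ genuine LP — and then appeal to polynomial-time solvability of linear programs. First I would check that, under a conditional choice model of the form Eq.~(\ref{eq:conditionalChoice}) (which includes the conditional logit Eq.~(\ref{eq:conditionalLogit})), the objective Eq.~(\ref{eq:objective}) coincides \emph{exactly} with the fractional MIP Eq.~(\ref{eq:MIPobjective}): 0/1 vectors $\bfx$ with $\sum_i x_i = k$ are in bijection with size-$k$ slates $A=\{i:x_i=1\}$, and substituting $P(i|s,A)=v(s,i)/(v(s,\bot)+\sum_{j\in A}v(s,j))$ into $\sum_{i\in A}P(i|s,A)\qbar(s,i)$ produces precisely the displayed ratio. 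Since item order does not affect choice probabilities here, restricting to unordered $k$-sets is without loss.

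Next I would relax the integrality constraint to $0\le x_i\le1$ to get the fractional LP Eq.~(\ref{eq:FRACobjective}) and invoke the relaxation result of \citet{chen:mgmtsci2000}, already quoted above: the feasible polytope $\{\bfx:\sum_i x_i=k,\ 0\le x_i\le1\}$ has a totally unimodular constraint matrix, hence integral vertices, and a ratio of affine functions with strictly positive denominator is quasi-linear on a polytope, so its maximum is attained at a vertex. Therefore the relaxation is tight — the optimal value of Eq.~(\ref{eq:FRACobjective}) equals that of Eq.~(\ref{eq:MIPobjective}) — and any integral vertex optimum of the relaxation directly names an optimal slate.

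Then I would apply the Charnes--Cooper \scite{charnes-cooper:1962} change of variables $t=(v(s,\bot)+\sum_j x_j v(s,j))^{-1}$, $y_i=x_i t$. Because $v(s,\bot)>0$ (e.g.\ $v(s,\bot)=e^{\tau u(s,\bot)}$ for the conditional logit), the denominator is bounded away from $0$ on the feasible region, so $t$ is well defined and positive and the inverse map is $x_i=y_i/t$; I would then verify in both directions that this map carries feasible points to feasible points with equal objective value, turning Eq.~(\ref{eq:FRACobjective}) into the genuine LP Eq.~(\ref{eq:LPobjective}) (including the small bookkeeping that sends $\sum_i x_i=k$ to $\sum_i y_i\le kt$). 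This LP has $|\calI|+1$ variables, $O(|\calI|)$ constraints, and coefficients polynomially bounded in the input; a polynomial-time LP solver returns an optimum $(\bfy^\ast,t^\ast)$, from which $x_i^\ast=y_i^\ast/t^\ast$ and — after moving to a vertex in polynomial time if necessary — the slate $\{i:x_i^\ast=1\}$, of size exactly $k$, is recovered. Composing the reductions, this slate maximizes Eq.~(\ref{eq:objective}), and the entire pipeline runs in time polynomial in $|\calI|$ (for any fixed $k$).

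No step requires a genuinely new idea; the content is entirely in the two reductions imported from \citet{chen:mgmtsci2000} and Charnes--Cooper. The one place I expect to spend care — the ``main obstacle'' — is the chain of \emph{solution} (not merely optimal-value) equivalences: I must be able to push an optimum of the final LP back through Charnes--Cooper and through the LP relaxation to produce an actual optimal $k$-slate, and this is exactly what forces attention to the positivity of $v(s,\bot)$, to the vertex-integrality argument, and to the $\sum_i x_i=k$ versus $\le k$ bookkeeping. Everything else is routine.
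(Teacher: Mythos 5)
Your proposal follows exactly the route the paper takes: recast Eq.~(\ref{eq:objective}) as the fractional MIP Eq.~(\ref{eq:MIPobjective}), relax to the fractional LP Eq.~(\ref{eq:FRACobjective}) using the total-unimodularity/vertex-integrality argument of \citet{chen:mgmtsci2000}, apply the Charnes--Cooper transformation to obtain the LP Eq.~(\ref{eq:LPobjective}), and recover an integral optimal slate via $x_i = y_i^\ast/t^\ast$. This is correct and is essentially the same argument the paper gives; your added care about solution (not just value) equivalence and the positivity of $v(s,\bot)$ is sound but does not change the approach.
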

Thus full Q-learning with slates using the $\SlateQ{}$ decomposition imposes at most a small polynomial-time overhead relative to item-wise Q-learning despite its combinatorial nature. 
We also note that many production recommender systems limit the set of items to be ranked using a separate retrieval policy, so the set of items to consider in the LP is usually much smaller than the complete item set. We discuss this further below in Section~\ref{sec:implementation}.

\subsection{Top-$k$ and Greedy Optimization}
\label{sec:greedyopt}

While the exact maximization of slates under the conditional choice model
can be accomplished in polynomial-time using $\qbar$ and the item-score function $v$, we may wish to avoid solving an LP at serving time.
A natural heuristic for constructing a slate is to simply add the $k$ items with
the highest score. In our \emph{top-$k$ optimization} procedure, we insert items into the slate in decreasing order of the product $v(s,i) \qbar(s,i)$.\footnote{Top-$k$ slate construction is quite common in slate-based myopic recommenders. It has recently been used in LTV optimization as well \cite{chen_etal:2018top}.}
This incurs only a $O(\log(\calI))$ overhead relative to the $O(\calI)$ time required for maximization for item-wise Q-learning. 

One problem with top-$k$ optimization is the fact that, when considering the item to add to
the $L$th slot (for $1<L\leq k$), item scores are not updated to reflect the
previous $L-1$ items already added to the slate. \emph{Greedy optimization},
instead of scoring each item \emph{ab initio}, updates item scores with respect to the current partial slate.
Specifically, given $A' = \{i_{(1)}, \ldots i_{(L-1)}\}$ of size $L-1 < k$, the $L$th item added is that with maximum marginal contribution:
%
$$\argmax_{i\not\in A'} \frac{v(s,i) \qbar(s,i) + \sum_{\ell < L} v(s,i_{(\ell)}) \qbar(s,i_{(\ell)})}
                        {v(s,i) + v(s,\bot) + \sum_{\ell < L} v(s,i_{(\ell)})}.$$
%
We compare top-$k$ and greedy optimizations with the LP solution in our offline simulation experiments below.

Under the general conditional choice model (including for the conditional logit model), neither top-$k$ nor greedy optimization will find the optimal solution as following counterexample illustrates:
\begin{center}
  \begin{tabular}{|c|c|c|}
    \hline
    Item & Score ($v(s,i)$) & Q-value \\ \hline\hline
    $\bot$ & 1 & 0 \\ \hline
    $a$ & 2 & 0.8 \\ \hline
    $b_1, b_2 $ & 1 & 1 \\ \hline
  \end{tabular}
\end{center}

The null item is always on the slate. Items $b_1, b_2$ are identical  w.r.t.\ their behavior. We have $V(\{a\}) = 1.6/3$, greater than $V(\{b_i\}) = 1/2$. Both top-$k$ and greedy will place $a$ on the slate first. However, $V(\{a,b_i\}) = 2.6/4$, whereas the optimal slate $\{b_1,b_2\}$ is valued at $2/3$. So for slate size $k=2$, neither top-$k$ nor greedy find the optimal slate.

While one might hope that the greedy algorithm provides some approximation guarantee, the set function is not submodular, which prevents standard analyses (e.g., \cite{nemhauser78,feige:jacm98,buchbinder_submodular:soda14}) from being used. The following example illustrates this.
\begin{center}
  \begin{tabular}{|c|c|c|}
    \hline
    Item & Score ($v(s,i)$) & Q-value \\ \hline\hline
    $\bot$ & 1 & 10 \\ \hline
    $a$ & 1 & 10 \\ \hline
    $b$ & 2 & $\veps$ \\ \hline
  \end{tabular}
\end{center}
We have expected values of the following item sets: $V(\emptyset) = 10; V(\{a\}) = 10; V(\{b\}) = (10+\veps)/3; V(\{a,b\}) = 5 + \veps/2$. The fact that $V(\{a\}) - V(\emptyset) < V(\{a,b\}) - V(\{b\})$ demonstrates lack of submodularity (the set function is also not monotone).\footnote{It is worth observing that without our exact cardinality constraint (sets must have size $k$), the optimal set under MNL can be computed in a greedy fashion \cite{talluri_vanryzin:ms04} (the analysis also applies to the conditional logit model).}

While we have no current performance guarantees for greedy and top-$k$, it's not hard to show that top-$k$ can perform arbitrarily poorly.
\begin{obs}
The approximation ratio of top-$k$ optimization for slate construction is
unbounded.
\end{obs}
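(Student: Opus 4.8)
\emph{Proof proposal.} The plan is to exhibit an explicit one-parameter family of instances of the LTV slate optimization problem (under the general conditional choice model, hence also realizable as a conditional logit model by taking $u(s,i)=\log v(s,i)$) on which the top-$k$ slate has value bounded by a constant independent of the parameter, while the optimal slate's value grows without bound. Unboundedness of the approximation ratio then follows immediately by letting the parameter go to infinity.

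Fix the slate size $k\ge 1$ and a parameter $M>1$. Take the item collection to consist of the null item $\bot$ with $v(s,\bot)=1$ and $\qbar(s,\bot)=0$; a single ``decoy'' item $a$ with $v(s,a)=M$ and $\qbar(s,a)=1$; and $k$ identical ``good'' items $b_1,\dots,b_k$ with $v(s,b_j)=1$ and $\qbar(s,b_j)=M-1$. Recall that top-$k$ inserts items in decreasing order of the product $v(s,i)\qbar(s,i)$; here that product equals $M$ for $a$ and $M-1<M$ for each $b_j$. Hence top-$k$ is forced to place $a$ on the slate first and then fill the remaining $k-1$ slots with an arbitrary choice of $k-1$ of the $b_j$; since the $b_j$ are interchangeable, tie-breaking is irrelevant and the resulting slate value is the same in every case.

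Next I would compute the two quantities using the closed form $V(A)=\big(\sum_{i\in A} v(s,i)\qbar(s,i)\big)\big/\big(v(s,\bot)+\sum_{i\in A} v(s,i)\big)$. The top-$k$ slate $\{a,b_1,\dots,b_{k-1}\}$ has value $\tfrac{M+(k-1)(M-1)}{1+M+(k-1)}=\tfrac{kM-(k-1)}{M+k}$, which is at most $k$ for every $M$. On the other hand the optimal value is at least the value of the slate $\{b_1,\dots,b_k\}$, namely $\tfrac{k(M-1)}{k+1}$, which tends to $\infty$ as $M\to\infty$. Therefore the ratio of optimal to top-$k$ value is at least $\tfrac{k(M-1)(M+k)}{(k+1)(kM-(k-1))}$, which diverges as $M\to\infty$; since this holds for every fixed slate size $k\ge 1$, the approximation ratio of top-$k$ slate construction is unbounded.

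There is no genuine technical obstacle here---the statement is established by a single counterexample family---so the only points that require care are (i) confirming that top-$k$ is truly forced into the bad slate regardless of how ties among the $b_j$ are broken (immediate, since the $b_j$ are identical and $a$ strictly dominates in the product score, so it is always selected first), and (ii) checking the instance is admissible for the conditional choice model, i.e.\ all item scores $v(s,i)$ are strictly positive and the $\qbar$-values are well defined, which holds for $M>1$. If one wishes to keep rewards bounded, one can normalize the $\qbar$-values by $M-1$; this rescales both the top-$k$ value and the optimal value by the same factor and does not affect the conclusion.
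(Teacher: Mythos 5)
Your proof is correct and follows the same basic strategy as the paper's---exhibit an explicit parametric family of instances on which the ratio of the optimal slate value to the top-$k$ slate value diverges---but the construction differs and is in two respects stronger. The paper uses a $k=1$ instance ($\bot$ and $a$ with score $\veps$, $b$ with score $1$; $\qbar$-values $0$, $1$, $\veps$) in which the optimum stays fixed at $V(\{a\})=1/2$ while the top-$k$ value $V(\{b\})=\veps/(1+\veps)$ tends to $0$. Note, however, that in the paper's instance $v(s,a)\qbar(s,a)=v(s,b)\qbar(s,b)=\veps$, so as stated it relies on the tie in the product score being broken against $a$; in your instance the decoy $a$ \emph{strictly} dominates each $b_j$ in the product score, so top-$k$ is genuinely forced into the bad slate regardless of tie-breaking. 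Your family also works uniformly for every slate size $k$, whereas the paper's example is stated only for $k=1$, and it makes the optimum diverge rather than the top-$k$ value vanish (either suffices). The computations check out: the top-$k$ slate value $(kM-(k-1))/(M+k)\le k$ is bounded while the optimum is at least $k(M-1)/(k+1)\to\infty$ as $M\to\infty$, so the approximation ratio is unbounded.
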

The following example demonstrates this.
\begin{center}
  \begin{tabular}{|c|c|c|}
    \hline
    Item & Score ($v(s,i)$) & Q-value \\ \hline\hline
    $\bot$ & $\veps$ & 0 \\ \hline
    $a$ & $\veps$ & 1 \\ \hline
    $b$ & 1 & $\veps$ \\ \hline
  \end{tabular}
\end{center}
Suppose we have $k=1$. Top-$k$ scores item $b$ higher than $a$, creating the slate with value
$V(\{b\}) = \veps/(1+\veps)$, while the optimal slate has value $V(\{a\}) = 1/2$.


\subsection{Algorithm Variants}
\label{sec:variants}

With a variety of slate optimization methods at our disposal, many variations of RL algorithms exist depending on the optimization method used during training and serving. Given a trained \SlateQ{} model, we can apply that model to \emph{serve} users using either top-$k$, greedy or the LP-based optimal method to generate recommended slates. Below we use the designations TS, GS, or OS to denote these serving protocols, respectively. These designations apply equally to (off-policy) Q-learned models, (on-policy) SARSA models, and even (non-RL) \emph{myopic} models.\footnote{A myopic model is equivalent to a Q-learned model with $\gamma = 0$.}

During Q-learning, slate optimization is also required at \emph{training time} to compute the maximum successor state Q-value (Eq.~\ref{eq:slatequpdate}). This can also use either of the three optimization methods, which we designate by TT, GT, and OT
for top-$k$, greedy and optimal LP training, respectively. This designation is not applicable when training a myopic model or SARSA (since SARSA is trained only on-policy). This gives us the following
collection of algorithms. For Q-learning, we have:

\begin{center}
\begin{tabular}{|c|c||c|c|c|}
\hline
\multicolumn{2}{|c||}{} & \multicolumn{3}{c|}{Serving}\\
\cline{3-5}
\multicolumn{2}{|c||}{} & Top-$k$ & Greedy & LP (Opt) \\
\hline\hline
\multirow{3}{*}{Training} & Top-$k$ & QL-TT-TS & QL-TT-GS & QL-TT-OS \\ \cline{2-5}
    & Greedy & QL-GT-TS & QL-GT-GS & QL-GT-OS  \\ \cline{2-5}
    & LP (Opt) & QL-OT-TS & QL-OT-GS & QL-OT-OS  \\ \hline
\end{tabular}
\end{center}

For SARSA and Myopic recommenders, we have:

\begin{center}
  \begin{tabular}{|c||c|c|}
    \hline
    Serving & SARSA & Myopic \\ \hline\hline
    Top-$k$ & SARSA-TS & MYOP-TS \\ \hline
    Greedy & SARSA-GS & MYOP-GS \\ \hline
    LP (Opt) & SARSA-OS & MYOP-OS \\ \hline
  \end{tabular}
\end{center}

In our experiments below we also consider two other baselines: Random, which recommends random slates from the feasible set; and \emph{full-slate Q-learning (FSQ)}, which is a standard, non-decomposed Q-learning method that treats each slate \emph{atomically (i.e., holistically)} as a single action. The latter is a useful baseline to test whether the \SlateQ{} decomposition provides leverage for generalization and exploration.

\subsection{Approaches for Other Choice Models}
\label{sec:cascadeOpt}

The \SlateQ{} decomposition works with any choice model that satisfies the assumptions \scassm\ and \rtassm, though the form of the slate optimization problem depends crucially on the choice model. To illustrate, we consider the \emph{cascade choice model} outlined in Section~\ref{sec:related} (see, e.g., Eq.~(\ref{eq:cascade})). Notice that the cascade model, unlike the general conditional choice model, has position-dependent effects (i.e., reordering of items in the slate changes selection probabilities and the expected LTV of the slate). However, it is not hard to show that the cascade model exhibits a form of ``ordered submodularity'' if we assume that the LTV or conditional Q-value of not selecting from the slate is no greater that the Q-value of selecting any item on the slate, i.e., if $Q(s,i) \geq Q(s, \bot)$ for all $i \in \calI$.\footnote{The statements to follow hold under the weaker condition that, for all states $s$, there are at least $k$ items $i^s_1, \ldots i^s_k$ such that $Q(s, i^s_j) \geq Q(s,\bot), \forall j \leq k$ (where $k$ is the slate size).} Specificially, the value of the marginal increase in value induced by adding item $i_{\ell+1}$ to the (ordered) partial slate $(i_1, i_2, \ldots i_{\ell})$ is no greater than the increase in value of adding $i_{\ell + 1}$ to a prefix of that slate $(i_1, i_2, \ldots i_j)$ for any $j < \ell$. Thus top-$k$ optimization can be used to support training and serving of the \SlateQ{} approach under the cascade model.\footnote{It is also not hard to show that top-$k$ is not optimal for the cascade model.}

While the general conditional choice model is order-independent, in practice, it may be the case that users incorporate some elements of a cascade-like model into the conditional choice model. For example, users may devote a \emph{random} amount of time $t$ or effort to inspect a slate of $k$ recommended items, compare the top $k' \leq k$ items, where $k' = F(t)$ is some function of the available time, and select (perhaps noisily) the most preferred item from among those inspected. This model would be a reasonable approximation of user behavior in the case of recommenders that involve scrolling interfaces for example. In such a case, we end up with a distribution over slate sizes. A natural heuristic for the conditional choice model would be, once the $k$-slate is selected, to \emph{order} the items on slate based their top-$k$ or greedy scores to increase the odds that the random slate actually observed by the user contains items that induce highest expected long-term engagement.

\section{User Simulation Environment}
\label{sec:simulator}

We discuss experiments with the various \SlateQ{} algorithms in Section~\ref{sec:empiricalSim}, using a simulation environment that, though simplified and stylized, captures several essential elements of a typical recommender system that drive a need for the long/short-term tradeoffs captured by RL methods. In this section, we describe the simulation environment and models used to test \SlateQ{} in detail. We describe the environment setup in a fairly general way, as well as the specific instantiations used in our experiments, since the simulation environment may be of broader interest.


\subsection{Document and Topic Model}

We assume a set of \emph{documents} $D$ representing the content available for recommendation. We also assume a set of \emph{topics} (or user interests) $T$ that capture fundamental characteristics of interest to users; we assume topics are indexed $1, 2, \ldots |T|$. Each document $d\in D$ has an associated \emph{topic vector} $\bfd \in [0,1]^{|T|}$, where $d_j$ is the degree to which $d$ reflects topic $j$. 

In our experiments, for simplicity, each document $d$ has only a single topic $T(d)$, so $\bfd = \bfe_i$ for some $i\leq |T|$ (i.e., we have a one-hot encoding of the document topic).
Documents are drawn from content distribution $P_D$ over topic vectors, which in our one-hot topic experiments is simply a distribution over individual topics.

Each document $d\in D$ also has a length $\ell(d)$ (e.g., length of a video, music track or news article). This is sometime used as one factor in assessing potential user engagement. While the model supports documents of different lengths, in our experiments, we assume each document $d$ has the same constant length $\ell$.

Documents also have an \emph{inherent quality} $L_d$, representing the topic-independent attractiveness to the average user. Quality varies randomly across documents, with document $d$'s quality distributed according to  $\calN(\mu_{T(d)},\sigma^2)$, where $\mu_t$ is a \emph{topic-specific} mean quality for any $t\in T$. For simplicity, we assume a fixed variance across all topics. In general, quality can be estimated over time from user responses as we discuss below; but in our experiments, we assume $L_d$ is observable to the recommender system (but not to the user \emph{a priori}, see below). Quality may also be user-dependent, though we do not consider that here, since the focus of our stylized experiments is on the ability of our RL methods to learn average quality at the topic level. Both the topic and quality of a consumed document impact long-term user behavior (see Section~\ref{subsec:user-dynamics} below).

In our experiments, we use $T=20$ topics, while the precise number of documents $|D|$ is immaterial as we will see. Of these, $14$ topics are low quality, with their mean quality evenly distributed across the interval $\mu_t \in [-3,0]$. The remaining $6$ topics are high quality, with their mean quality evenly distributed across the interval $\mu_t \in [0,3]$.

\subsection{User Interest and Satisfaction Models}

Users $u\in U$ have various degrees of interests in topics, ranging from $-1$ (completely uninterested) to $1$ (fully interested), with each user $u$ associated with an \emph{interest vector} $\bfu\in [-1,1]^{|T|}$.
User $u$'s interest in document $d$ is given by the dot product $I(u,d) = \bfu\bfd$. We assume some prior distribution $P_U$ over user interest vectors, but user $u$'s interest vector is dynamic, i.e., influenced by their document consumption (see below).
To focus on how our RL methods learn to influence user interests and the quality of documents consumed, we treat a user's interest vector $\bfu$ as \emph{fully observable} to the recommender system. In general, user interests are latent, and a partially observable/belief state model is more appropriate.

A user's \emph{satisfaction} $S(u,d)$ with a consumed document $d$ is a function $f(I(u,d),L_d)$ of user $u$'s interest and document $d$'s quality. While the form of $f$ may be quite complex in general, we assume a simple convex combination $S(u,d) = (1-\alpha) I(u,d) + \alpha L_d$. Satisfaction influences user dynamics as we discuss below.

In our experiments, a new user $u$'s prior interest $\bfu$ is sampled uniformly from $\bfu\in [-1,1]^{|T|}$; specifically, there is no prior correlation across topics.
We use an extreme value of $\alpha = 1.0$ so that a user's satisfaction with a consumed document is fully dictated by document quality. This leaves user interest only to drive the selection of the document from the slate which we describe next.

\subsection{User Choice Model}

When presented with a slate of $k$ documents, a \emph{user choice} model impacts which document (if any) from the slate is consumed by the user. We assume that a user \emph{can observe} any recommended document's topic prior to selection, but \emph{cannot observe} its quality before consumption. However, the user will observe the true document quality \emph{after} consuming it. While somewhat stylized, this treatment of topic and quality observability (from the user's) perspective is reasonably well-aligned with the situation in many recommendation domains.

The general simulation environment allows arbitrary choice functions to be defined as a function of user's state (interest vector, satisfaction) and the features of the document (topic vector, quality) in the slate.
In our experiments, we use the general conditional choice model (Eq.~(\ref{eq:conditionalChoice})) as the main model for our RL methods. User $u$'s interest in document $d$, $I(u,d) = \bfu\bfd$, defines the document's relative appeal to the user and serves as the basis of the choice function.
For slates of size $k$, the null document $\bot$ is always added as a $(k+1)$st element, which (for simplicity) has a fixed utility across all users.

We also use a second choice model in our experiments, an \emph{exponential cascade model} \cite{joachims2002}, that accounts for document position on a slate. This choice model assumes ``attention'' is given to one document at a time, with exponentially decreasing attention given to documents as a user moves down the slate.
The probability that the document in position $j$ is inspected is $\beta_0\beta^j$, where $\beta_0$ is a \emph{base inspection probability} and $\beta$ is the \emph{inspection decay}. If a document is given attention, then it is selected with a \emph{base choice probability} $P(u,d)$; if the document in position $j$ is not examined or selected/consumed, then the user proceeds to the $(j+1)$st document.
The probability that the document in position $j$ is consumed is:
$$P(j,A) = \beta_0 \beta^{j}P(u,d).$$
While we don't optimize for this model, we do run experiments in which the recommender learns a policy that assumed the general conditional choice model, but users behave according to the cascade model. In this case, the base choice probability $P(u,d)$ for a document in the cascade model is set to be its normalized probability in the conditional choice model.
While the cascade model allows for the possibility of no click, even without the fictitious null document $\bot$, we keep the null document to allow the probabilities to remain calibrated relative to the conditional model. In our experiments, we use $\beta_0 = 1.0$ and $\beta = 0.65$.

\subsection{User Dynamics}
\label{subsec:user-dynamics}

To allow for a non-myopic recommendation algorithm---in our case, RL methods---to impact overall user engagement positively, we adopt a simple, but natural model of \emph{session termination}. We assume each user $u$ has an initial
\emph{budget} $B_u$ of time to engage with content during an extended session. This budget is not observable to the recommender system, and is randomly realized at session initiation using some prior $P(B)$.\footnote{Naturally, other models that do not use terminating sessions are possible, and could emphasize
amount of engagement per period.}
Each document $d$ consumed reduces user $u$'s budget by the fixed document length $\ell$. But after consumption, the quality of the document (partially) replenishes the used budget where the budget decreases by the fixed document length $\ell$ less a \emph{bonus} $b < \ell$ that increases with the document's appeal $S(u,d)$. In effect, more satisfying documents decrease the time remaining in a session at a lower rate. In particular, for any fixed topic, documents with higher quality have a higher positive impact on cumulative engagement (reduce budget less quickly) than lower quality documents. A session ends once $B_u$ reaches $0$. Since sessions terminate with probability 1, discounting is unnecessary.

In our experiments, each user's initial budget is $B_u = 200$ units of time; each consumed document uses $\ell=4$ units; and if a slate is recommended, but no document is clicked, $0.5$ units are consumed. We set bonus $b = \frac{0.9}{3.4}\cdot \ell \cdot S(u,d)$.

The second aspect of user dynamics allows user interests to evolve as a function of the documents consumed.
When user $u$ consumes document $d$, her interest in topic $T(d)$ is nudged stochastically, biased slightly towards increasing her interest, but allows some chance of decreasing her interest. Thus, a recommender faces a short-term/long-term tradeoff between nudging a user's interests toward topics that tend to have higher quality at the expense of short-term consumption of user budget.

We use the following stylized model to set the magnitude of the adjustment---how much the interest in topic $T(d)$ changes---and its polarity---whether the user's interest in topic $T(d)$ increases or decreases. 
Let $t=T(d)$ be the topic of the consumed document $d$ and $I_t$ be user $u$'s interest in topic $t$ prior to consumption of document $d$. The (absolute) change in user $u$'s interest is $\Delta_t(I_t) = (-y |I_t| + y)\cdot -I_t$, where $y\in [0,1]$ denotes the fraction of the distance between the current interest level and the maximum level $(1, -1)$ that the update move user $u$'s interest. This ensures that more entrenched interests change less than neutral interests.

In our experiments we set $y = 0.3$. A positive change in interest, $I_t\leftarrow I_t + \Delta_t(I_t)$, occurs with probability $[I(u,d)+1]/2$, and a negative change, $I_t\leftarrow I_t - \Delta_t(I_t)$, with probability $[1-I(u,d)]/2$. Thus positive (resp., negative) interests are more likely to be reinforced, i.e., become more positive (resp., negative), with the odds of such reinforcement increasing with the degree of entrenchment.

\subsection{Recommender System Dynamics}

At each stage of interaction with a user, $m$ \emph{candidate} documents are drawn from $P_D$, from which a slate of size $k$ must be selected for recommendation. This reflects the common situation in many large-scale commercial recommenders in which a variety of mechanisms are used to sub-select a small set of candidates from a massive corpus, which are in turn scored using more refined (and computationally expensive) predictive models of user engagement.

In our simulation experiments, we use $m=10$ and $k=3$. This small set of candidate documents and the small slate size is used to allow explicitly enumeration of all slates, which allows us to compare \SlateQ{} to RL methods like Q-learning that do not decompose the Q-function. In our live experiments with the YouTube platform (see Section~\ref{sec:empiricalLive}), slates are of variable size and the number of candidates is on the order of $O(1000)$.

\section{Empirical Evaluation: Simulations}
\label{sec:empiricalSim}

We now describe several sets of results designed to assess the impact of the \SlateQ{} decomposition. Our simulation environment is implemented in a general fashion, supporting many of the general models and behaviors described in the previous sections. Our RL algorithms, both those using \SlateQ{} and FSQ, are implemented using Dopamine \cite{castro18dopamine}. We use a standard two-tower architecture with stacked fully connected layers to represent user state and document. Updates to the Q-models are done online by batching experiences from user simulations. Each training-serving strategy is evaluated over 5000 simulated users for statistical significance. All results are within a 95\% confidence interval.

\subsection{Myopic vs. Non-myopic Recommendations}

We first test the quality of \emph{(non-myopic) LTV policies} learned using \SlateQ{} to optimize engagement ($\gamma=1)$, using a selection of the \SlateQ{} algorithms (SARSA vs.\ Q-learning, different slate optimizations for training/serving). We compare these to \emph{myopic scoring (MYOP)} ($\gamma=0$), which optimizes only for immediate reward, as well as a Random policy. The goal of these comparisons is to identify whether optimizing for long-term engagement using RL (either Q-learning or 1-step policy improvement via SARSA) provides benefit over myopic recommendations.

The following table compares several key metrics of the final trained algorithms (all methods use 300K training steps):
\begin{center}
    \begin{tabular}{|r||c|c|}
    \hline
        Strategy    &   Avg.\ Return (\%)   &   Avg.\ Quality (\%)  \\
    \hline\hline
        Random      &   159.2               &   -0.5929             \\
    \hline
        MYOP-TS     &   166.3 {\footnotesize (4.46\%)}      &   -0.5428 {\footnotesize (8.45\%)}    \\
        MYOP-GS     &   166.3 {\footnotesize (4.46\%)}      &   -0.5475 {\footnotesize (7.66\%)}    \\
    \hline
        SARSA-TS    &   168.4 {\footnotesize (5.78\%)}      &   -0.4908 {\footnotesize (17.22\%)}   \\
        SARSA-GS    &   172.1 {\footnotesize (8.10\%)}      &   -0.3876 {\footnotesize (34.63\%)}   \\
    \hline
        QL-TT-TS    &   168.4 {\footnotesize (5.78\%)}      &   -0.4931 {\footnotesize (16.83\%)}   \\
        QL-GT-GS    &   172.9 {\footnotesize (8.61\%)}      &   -0.3772 {\footnotesize (36.38\%)}   \\
    \hline
        QL-OT-TS    &   169.0 {\footnotesize (6.16\%)}      &   -0.4905 {\footnotesize (17.27\%)}   \\
        QL-OT-GS    &   173.8 {\footnotesize (9.17\%)}      &   -0.3408 {\footnotesize (42.52\%)}   \\
        QL-OT-OS    &   174.6 {\footnotesize (9.67\%)}      &   -0.3056 {\footnotesize (48.46\%)}   \\
    \hline
    \end{tabular}
\end{center}

The LTV methods (SARSA and Q-learning) using \SlateQ{} offer overall improvements in average return per user session. The magnitude of these improvements only tells part of the story: we also show percentage improvements relative to Random are shown in parentheses---Random gives a sense of the baseline level of cumulative reward that can be achieved without any user modeling at all. For instance, relative to the random baseline, QL-OT-GS provides a  provides a 105.6\% greater improvement than MYOP.
The LTV methods all learn to recommend documents of much higher quality than MYOP,
which has a positive impact on overall session length, which explains the improved return per user. 


We also see that LP-based slate optimization during training (OT) provides improvements over top-$k$ and greedy optimization (TT, GT) in Q-learning when comparing similar
serving regimes (e.g., QL-OT-GS vs.\ QL-GT-GS , and QL-OT-TS vs.\ QL-TT-TS).
Optimal serving (OS) also shows consistent improvement over top-$k$ and greedy serving---and greedy serving (GS) improves significantly over top-$k$ serving (TS)---when compared under the same training regime.
However, the combination of optimal training and top-$k$ or greedy serving performs well, and is especially useful when serving latency constraints are tight, since optimal training is generally done offline.

Finally, optimizing using Q-learning gives better results than on-policy SARSA (i.e., one-step improvement) under comparable training and serving regimes. But SARSA itself has significantly higher returns than MYOP, demonstrating the value of on-policy RL for recommender systems.
Indeed, repeatedly serving-then-training (with some exploration) using SARSA would implement a natural, continual policy improvement.
These results demonstrate, in this simple synthetic recommender system environment, that using RL to plan long-term interactions can provide significant value in terms of overall engagement.

\subsection{\SlateQ{} vs. Holistic Optimization}

Next we compare the quality of policies learned using the \SlateQ{} decomposition to FSQ, the non-decomposed Q-learning method that treats each slate atomically as a single action. We set $|T|=20$, $m=10$, and $k=3$ so that we can enumerate all 
$\binom{10}{3}$ slates for FSQ maximization. 
Note that the Q-function for FSQ requires representation of all 
$\binom{20}{3} = 1140$ 
slates as actions, which can impede both exploration and generalization. For \SlateQ{} we test only SARSA-TS (since this is the method tested in our live experiment below). The following table shows our results:

\begin{center}
  \begin{tabular}{|r||c|c|}
    \hline
    & Avg.\ Return (\%) & Avg.\ Quality (\%) \\ \hline\hline
    Random & 160.6 & -0.6097 \\ \hline
    FSQ & 164.2  {\footnotesize (2.24\%)} & -0.5072  {\footnotesize (16.81\%)} \\ \hline
    SARSA-TS & 170.7  {\footnotesize (6.29\%)} & -0.5340  {\footnotesize (12.41\%)} \\ \hline
  \end{tabular}
\end{center}

While FSQ, which is an off-policy Q-learning method, is guaranteed to converge to the optimal slate policy in theory with sufficient exploration, we see that, even using an \emph{on-policy method} like SARSA with a single step of policy improvement, \SlateQ{} methods perform significantly better than FSQ, offering a 180\% greater improvement over Random than FSQ. This is the case despite \SlateQ{} using no additional training-serving iterations to continue policy improvement.
This is due to the fact that FSQ must learn Q-values for 1140 distinct slates, making it difficult to explore and generalize. FSQ also takes roughly 6X the training
time of \SlateQ{} over the same number of events. These results demonstrate the considerable value of the \SlateQ{} decomposition.

Improved representations could help FSQ generalize somewhat better, but the approach is inherently unscalable, while \SlateQ{} suffers from no such limitations (see live experiment below). 
Interestingly, FSQ does converge quickly to a policy that offers recommendations of greater average quality than \SlateQ{}, but fails to make an appropriate tradeoff with user interest.


\subsection{Robustness to User Choice}

Finally, we test the robustness of \SlateQ{} to changes in the underlying user choice model. Instead of the assumed choice model defined above, users select items from the recommended slate using a simple (exponential) \emph{cascade model}, where items on the slate are inspected from top-to-bottom with a position-specific probability, and consumed with probability proportional to $I(u,d)$ if inspected. If not consumed, the next item is inspected. Though users act in this fashion, \SlateQ{} is trained using the original conditional choice model and the same decomposition is also used to optimize slates at serving time.

The following table shows results:
\begin{center}
    \begin{tabular}{|r||c|c|}
    \hline
        Strategy    &   Avg.\ Return (\%)   &   Avg.\ Quality (\%)  \\
    \hline\hline
        Random      &   159.9               &   -0.5976             \\
    \hline
        MYOP-TS     &   163.6 {\footnotesize (2.31\%)}      &   -0.5100 {\footnotesize (14.66\%)}   \\
    \hline
        SARSA-TS    &   166.8 {\footnotesize (4.32\%)}      &   -0.4171 {\footnotesize (30.20\%)}   \\
    \hline
        QL-TT-TS    &   166.5 {\footnotesize (4.13\%)}      &   -0.4227 {\footnotesize (29.27\%)}   \\
    \hline
        QL-OT-TS    &   167.5 {\footnotesize (4.75\%)}      &   -0.3985 {\footnotesize (33.32\%)}   \\
        QL-OT-OS    &   167.6 {\footnotesize (4.82\%)}      &   -0.3903 {\footnotesize (34.69\%)}   \\
    \hline
    \end{tabular}
\end{center}

\SlateQ{} continues to outperform MYOP, even when the choice model does not accurately reflect the true environment, demonstrating its relative robustness.
\SlateQ{} can be used with other choice models. For example, \SlateQ{} can be trained by assuming the cascade model, with only the optimization formulation requiring adaptation (see our discussion in Section~\ref{sec:cascadeOpt}). But since any choice model will generally be an \emph{approximation} of true user behavior, this form of robustness is critical.

Notice that QL-TT and SARSA have inverted relative performance compared to the experiments
above. This is due to the fact that Q-learning exploits the (incorrect) choice model to optimize during training, while SARSA, being on-policy, only uses the choice model to compute expectations at serving time. This suggests that an on-policy control method like SARSA (with continual policy improvement) may be more robust than Q-learning in some settings.

\section{A Practical Methodology}
\label{sec:implementation}

The deployment of a recommender system using RL or TD methods to optimize for long-term user engagement presents a number of challenges in practice. In this section, we identify several of these and suggest practical techniques to resolve them, including ways in which to exploit an existing \emph{myopic}, item-level recommender to facilitate the deployment of a non-myopic system.

Many (myopic) item-level recommender systems \cite{liu_etal:letor2009,covington:recsys16} have the following components:
\begin{enumerate}[label=(\roman*)]
    \item \emph{Logging} of impressions and user feedback;
    \item \emph{Training} of some regression model (e.g., DNN) to predict user responses for user-item pairs, which are then aggregated by some scoring function;
    \item \emph{Serving} of recommendations, ranking items by score (e.g., returning the top $k$ items for recommendation).
\end{enumerate}
Such a system can be exploited to quickly develop a non-myopic recommender system based on Q-values, representing some measure of long-term engagement, by addressing several key challenges.

\subsection{State Space Construction}

A critical part of any RL modeling is the design of the state space, that is, the development of a set of features that adequately capture a user's past history to allow prediction of long-term value (e.g., engagement) in response to a recommendation.
For the underlying process to be a MDP, the feature set should be (at least approximately) predictive of immediate user response (e.g., immediate engagement, hence \emph{reward}) and self-predictive (i.e., summarizes user history in a way that renders the implied dynamics Markovian).

The features of an extant myopic recommender system typically satisfy both of these requirements, meaning that an RL or TD model can be built using the same logged data (organized into trajectories) and the same featurization.
The engineering, experimentation and experience that goes into developing state-of-the-art recommender systems means that they generally capture (almost) all aspects of history required to predict immediate user responses (e.g., pCTR, listening time, other engagement metrics); i.e., they form a sufficient statistic. 
In addition, the core input features (e.g., static user properties, summary statistics of past behavior and responses) are often self-predictive (i.e., no further history could significantly improve next state prediction). This fact can often be verified by inspection and semantic interpretation of the (input) features.
Thus, using the existing state definition provides a natural, practical way to construct TD or RL models. We provide experimental evidence below to support this assertion in Section~\ref{sec:empiricalLive}.

\subsection{Generalization across Users}

In the MDP model of a recommender system, each user should be viewed as a separate environment or separate MDP. However, it is critical to allow for generalization across users, since few if any users generate enough experience to allow reasonable recommendations otherwise. Of course, such generalization is a hallmark of almost any recommender system. In our case, we must generalize the (implicit) MDP dynamics across users. The state representation afforded by an extant myopic recommender system is already intended to do just this, so by learning a Q-function that depends on the same user features and the myopic system, we obtain the same form of generalization.

\subsection{User Response Modeling}

As noted in Sections~\ref{sec:slate-decomposition} and \ref{sec:slateopt}, \SlateQ{} takes advantage of some measure of immediate item appeal or utility (conditioned on a specific user or state) to determine user choice behavior. In practice, since myopic recommender systems often predict these immediate responses, for example, using pCTR models, we can use these models directly to assess the immediate appeal $v(s,i)$ required by our \SlateQ{} choice model. For instance, we can use a myopic model's pCTR predictions directly as a (unnormalized) choice probabilities for items in a slate, or we can use the logits of such a model in the conditional logit choice model.
Furthermore, by using the same state features (see above), it is straightforward to build a multi-task model \cite{zhang2017asurvey} that incorporates our long-term engagement prediction with other user response predictions.

\subsection{Logging, Training and Serving Infrastructure}

The training of long-term values $Q^\pi(s,a)$ requires logging of user
data, and live serving of recommendations based on these LTV scores. The model architecture we detail below
exploits the same logging, (supervised) training and serving infrastructure
as used by the myopic recommender system.


Fig.~\ref{fig:system_overview} illustrates the structure of our LTV-based recommender system---here we focus on SARSA rather than Q-learning, since our long-term experiment in Section~\ref{sec:empiricalLive} uses SARSA.  In myopic recommender systems, the regression model predicts immediate user response (e.g., clicks, engagement), while in our non-myopic recommender system, label generation provides LTV labels, allowing the regressor to model $\qbar^\pi(s,a)$.

\begin{figure}[t]
\centering
  \includegraphics[width=0.65\linewidth]{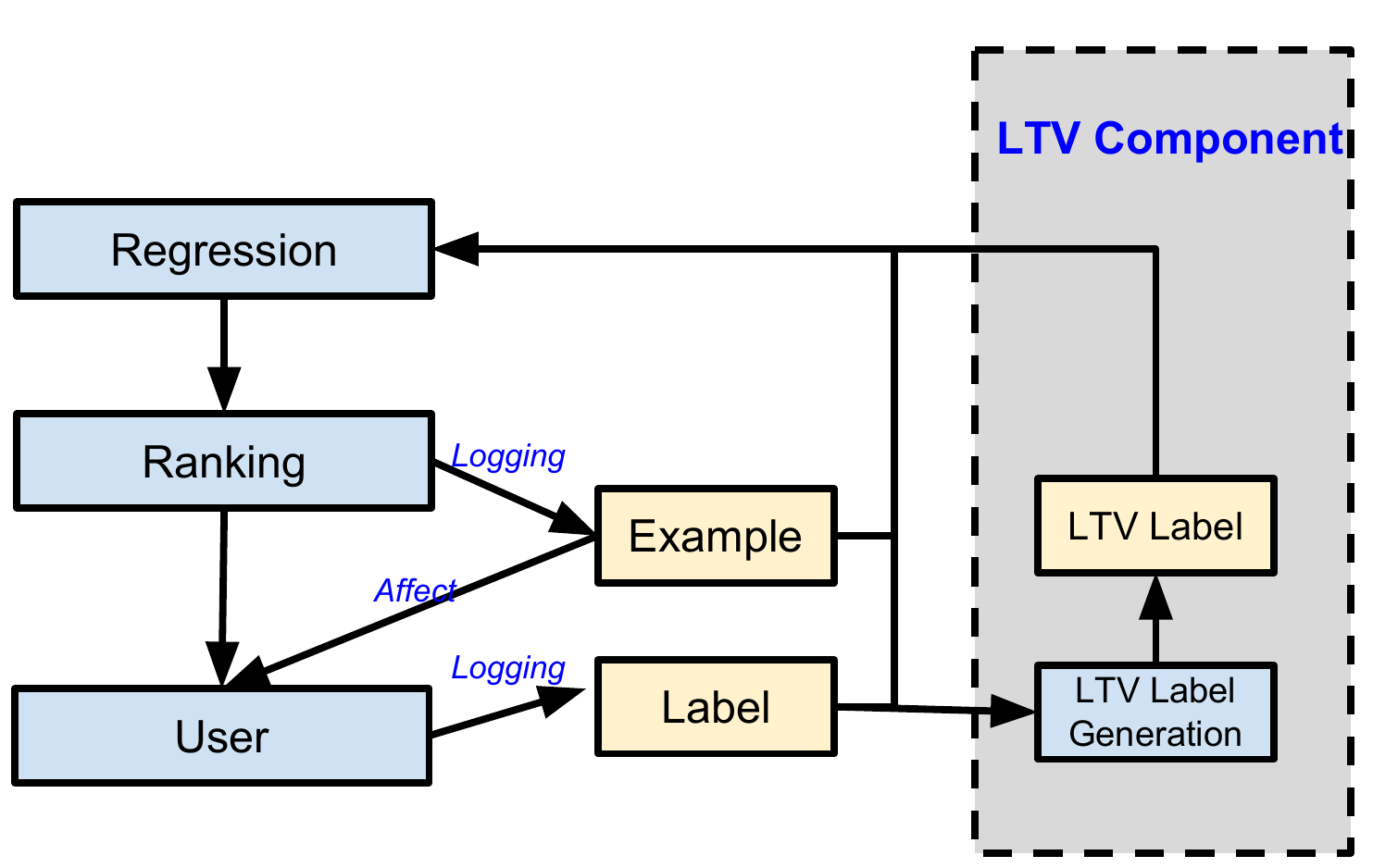}
    \vspace*{-2mm}
  \caption{Schematic View of a Non-myopic Recommender Training System}
  \label{fig:system_overview}
    \vspace*{-3mm}
\end{figure}

Models are trained periodically and pushed to the server. The ranker uses the latest model to recommend items and logs user feedback, which is used to train new models.
Using LTV labels, iterative model training and pushing can be viewed as a form of \emph{generalized policy iteration}~\cite{sutton:rlbook}.
Each trained DNN represents the value of the policy that generated the prior batch of training data, thus training is effectively \emph{policy evaluation}.
The ranker acts greedily with respect to this value function, thus performing \emph{policy improvement}.

LTV label generation is similar to DQN training \cite{mnih2015}. A main network learns the LTV of individual items, $\qbar^\pi(s,a)$---this network is easily extended from the existing myopic DNN. For stability, bootstrapped LTV labels (Q-values) are generated using a separate \emph{label network}.
We periodically copy the weights of the main network to the label network and use the (fixed) label network $\qbar_{label}(s, a)$ to compute LTV labels between copies.  LTV labels are generated using Eq.~(\ref{eq:slatesarsaupdate}).


\section{Empirical Evaluation: Live Experiments}
\label{sec:empiricalLive}

We tested the \SlateQ{} decomposition---specifically, the SARSA-TS algorithm, on YouTube (\texttt{https://www.youtube.com/}), a large-scale video recommender with $O(10^9)$ users and $O(10^8)$ items in its corpus.
The system is typical of many practical recommender systems with two main components. A \emph{candidate generator} retrieves a small subset (hundreds) of items from a large corpus that best match a user context.  The \emph{ranker} scores/ranks candidates using a DNN with both user context and item features as input.
It optimizes a combination of several objectives (e.g., clicks, expected engagement, several other factors).

The extant recommender system's policy is \emph{myopic}, scoring items for the slate using their \emph{immediate} (predicted) expected engagement. In our experiments, we replace the myopic engagement measure with an \emph{LTV estimate} in the ranker scoring function. We retain other predictions and incorporate them into candidate scoring as in the myopic model.
Our non-myopic recommender system maximizes \emph{cumulative} expected engagement, with user trajectories capped at $N$ days. Since homepage visits can be spaced arbitrarily in time, we use time-based rather than event-based discounting to handle credit assignment across large time gaps. 
If consecutive visits occur at times $t_1$ and $t_2$, respectively, the relative discount of the reward at $t_2$ is $\gamma^{(t_2 - t_1)/c}$, where $c$ is a parameter that controls the time scale for discounting.

Our model extends the myopic ranker using the practical methodology outlined in Section~\ref{sec:implementation}. Specifically, we learn a multi-task feedforward deep network \cite{zhang2017asurvey}, which learns $\qbar(s, i)$, the predicted long-term engagement of item $i$ (conditional on being clicked) in state $s$, as well as the immediate appeal $v(s, i)$ for pCTR/user choice computation (several other response predictions are learned, which are identical to those used by the myopic model).
The multi-task feedforward DNN network has 4 hidden layers of sizes 2048, 1024, 512, 256; and used ReLU activation functions on each of the hidden layers. Apart from the LTV/Q-value head, other heads include pCTR, and other user responses.
To validate our methodology, the DNN structure and all input features are identical to the production model which optimizes for short-term (myopic) immediate reward. The state is defined by user features (e.g., user’s past history, behavior and responses, plus static user attributes).
This also makes the comparison with the baseline fair.

The full training algorithm used in our live experiment is shown in Algorithm~\ref{algorithm:training}.
The model is trained using TensorFlow in a distributed training setup \cite{tensorflow2015-whitepaper} using stochastic gradient descent.
We train on-policy over pairs of consecutive start page visits, with LTV labels computed using Eq.~\ref{eq:slatesarsaupdate}, and use top-$k$ optimization for 
serving---i.e., we test SARSA-TS. The existing myopic recommender system (baseline) also builds slates greedily---i.e., MYOP-TS.

\begin{small}
\begin{algorithm}
  \caption{On-policy \SlateQ{} for Live Experiments}
  \begin{algorithmic}[1]
  \label{algorithm:training}
    \STATE \textbf{Parameters:}
    \vspace*{-3mm}
    \begin{itemize}\denselist
        \item $T$: the number of iterations.
        \item $M$: the interval to update label network.
        \item $\gamma$: discount rate.
        \item $\theta_{main}$: the parameter for the main neural network.
        \item $\qbar_{main}$: that predicts items' long-term value.
        \item $\theta_{label}$: the parameter for the label neural network $\qbar_{label}$.
        \item $\theta_{pctr}$: the parameter for the neural network that predicts items' pCTR.
    \end{itemize}
    \vspace*{-3mm}
    \STATE \textbf{Input:} $D_{training}=(s,A,C,L_{myopic}, s', A')$: the training data set.
    \vspace*{-3mm}
    \begin{itemize}\denselist
     \item $s$: current state features
     \item $A=(a_1,... ,a_k)$: recommended slate of items in current state; $a_{i}$ denotes item features
     \item $C=(c_1,...,c_k)$: $c_{i}$ denotes whether item $a_{i}$ is clicked
     \item $L_{myopic}=(l^{1}_{myopic},...,l^{k}_{myopic})$: myopic (immediate) labels
     \item $s'$: next state features
     \item $A'=(a_1',... ,a_k')$: recommended slate of items in next state.
    \end{itemize}
    \vspace*{-3mm}
    \STATE \textbf{Output:} Trained Q-network $\qbar_{main}$ that predicts items' long-term value.
    
    \STATE \textbf{Initialization} $\theta_{label}=0$, $\theta_{main}$ randomly, $\theta_{pctr}$   randomly
    \FOR{$i = 1 \dots T$}
      \IF{$i \bmod  M = 0$}
        \STATE$\theta_{label} \leftarrow \theta_{main}$
      \ENDIF
      
      \FOR{each example $(s,A,C,L_{myopic}, s', A') \in D_{training}$}
        \FOR{each item $a_i \in A$}
          \STATE update $\theta_{pctr}$ using click label $c_i$
          \IF{$a_i$ is clicked}
            \STATE probability: $\pctr(s', a_i', A')\leftarrow \pctr(s', a_i')/\sum_{a_i'\in A}pctr(s', a_i')$
            \STATE LTV label: $l^i_{ltv}\leftarrow l^i_{myopic} +\sum_{a_i'\in A'}\pctr(s', a_i', A') \qbar_{label}(s', a_i')$ 
            \STATE update $\theta_{main}$ using LTV label $l^i_{ltv}$
          \ENDIF
        \ENDFOR
      \ENDFOR
    \ENDFOR
  \end{algorithmic}
\end{algorithm}
\end{small}

We note that at serving time, we don't just choose the slate using the top-$k$ method, we also \emph{order} the slate presented to the user according to the item scores $v(s,i)\qbar^\pi(s,i)$ for each item $i$ (at state $s$). The reason for this is twofold. First, we expect that the user experience is positively impacted by placing more appealing items, that are likely to induce longer-term engagement, earlier in the slate. Second, the scrolling nature of the interface means that the slate size $k$ is not fixed at serving time---the number of inspected items varies per user-event (see discussion in Section~\ref{sec:cascadeOpt}).



We experimented with live traffic for three weeks, treating a small, but statistically significant, fraction of users to recommendations generated by our SARSA-TS LTV model. The control is a highly-optimized production machine learning model that 
optimizes for immediate engagement (MYOP-TS). Fig.~\ref{fig:twt_stats} shows the percentage increase in aggregate user engagement using LTV over the course of the experiment relative to the control, and indicates that our model outperformed the baseline on the key metric under consideration, consistently and significantly. Specifically, users presented recommendations by our model had sessions with greater engagement time relative to baseline.

\begin{figure}
\centering
   \includegraphics[width=0.70\linewidth]{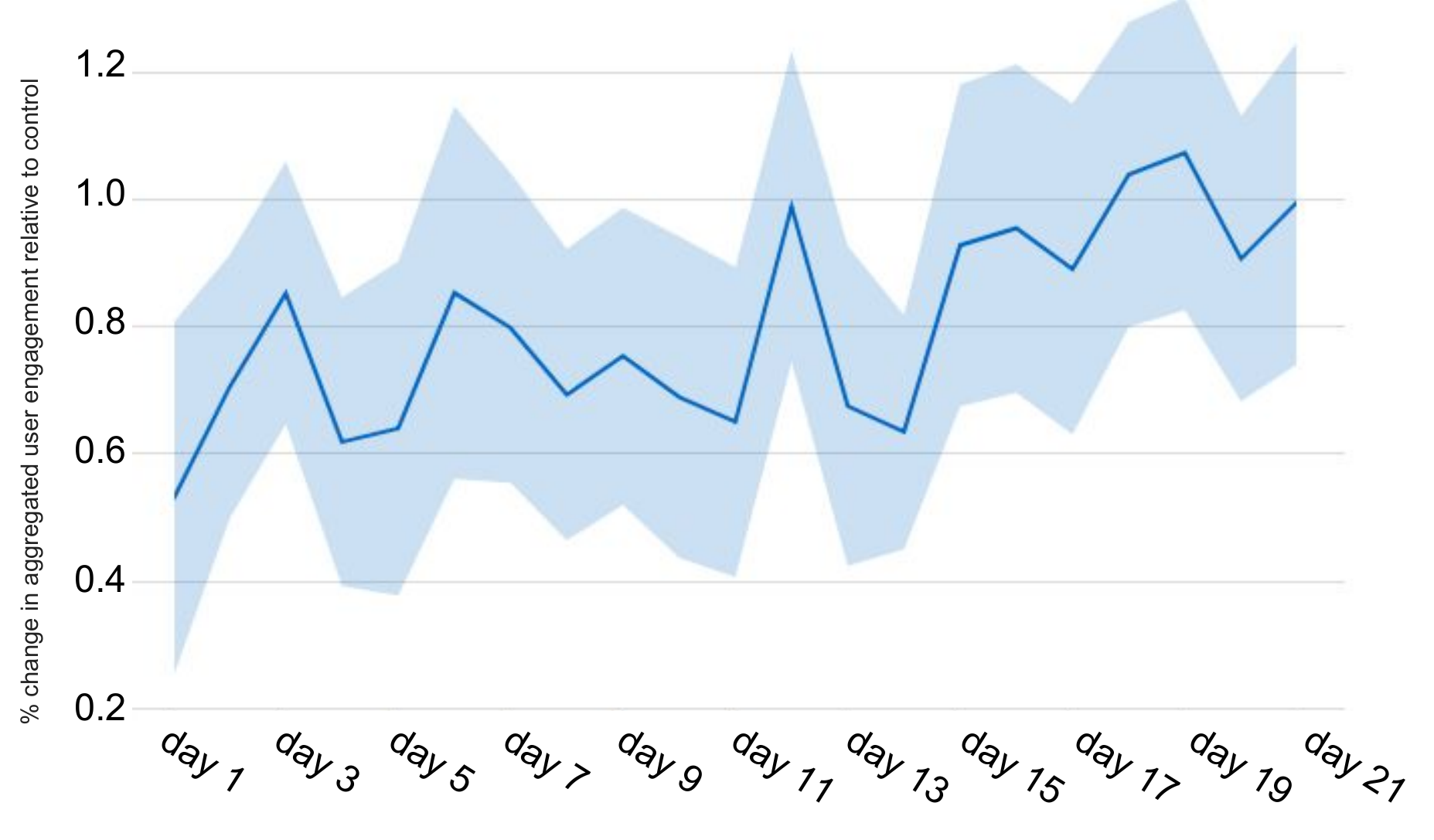}
     \vspace*{-3mm}
   \caption{Increase in user engagement over the baseline. Data points are statistically significant and within 95\% confidence intervals.
   \label{fig:twt_stats}}
     \vspace*{-3mm}
\end{figure}

Fig.~\ref{fig:engagement_by_position} shows the change in distribution of cumulative engagement originating from items at different positions in the slate. Recall that the number of items viewed in any user-event varies, i.e., experienced slates are of variable size and we show the first ten positions in the figure.
The results show that the users under treatment have more engaging sessions (larger LTVs) from items ranked higher in the slate compared to users in the control group, which suggests that top-$k$ slate optimization performs reasonably in this domain.\footnote{The apparent increase in expected engagement at position 10 is a statistical artifact due to the small number of events at that position: the number of observed events at each position decreases roughly exponentially, and position 10 has roughly two orders of magnitude fewer observed events than any of the first three positions.}

\begin{figure}
\centering
   \includegraphics[width=0.70\linewidth]{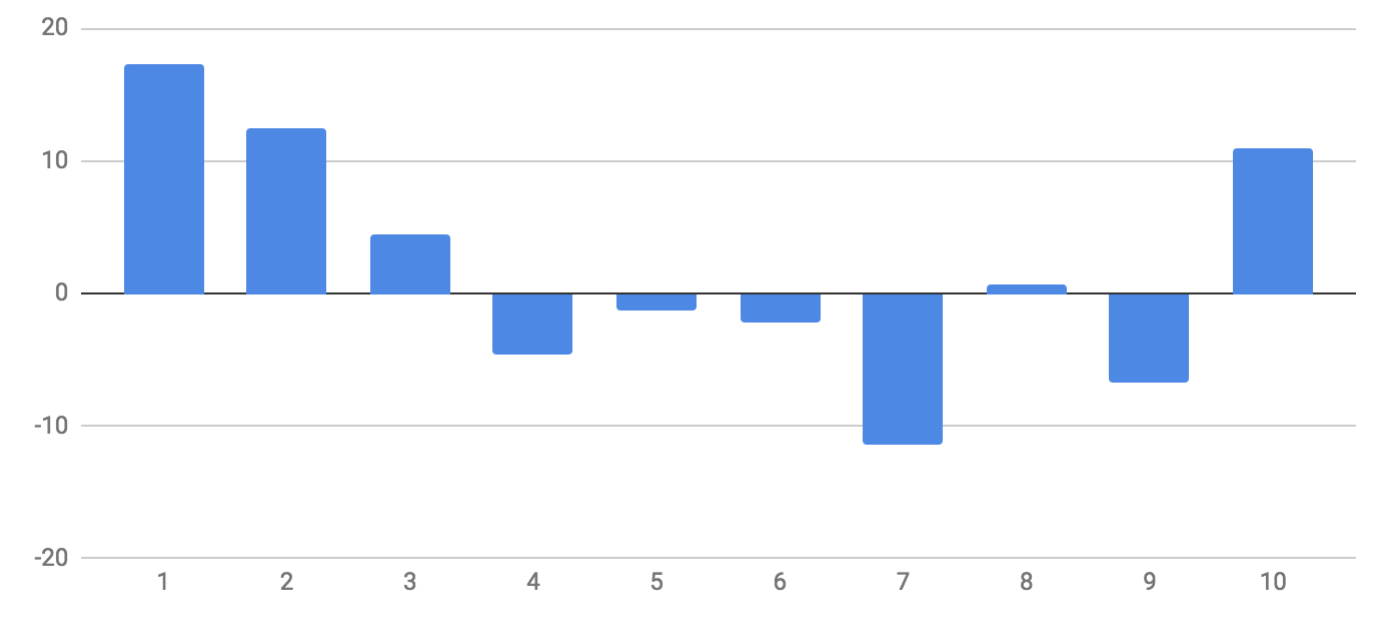}
     \vspace*{-3mm}
   \caption{Percentage change in long-term user engagement vs.\ control ($y$-axis) across positions in the slate ($x$-axis). Top 3 positions account for approximately 95\% of engagement.}
   \label{fig:engagement_by_position}
     \vspace*{-3mm}
\end{figure}

\section{Conclusion}

In this work, we addressed the problem of optimizing long-term user engagement in slate-based recommender systems using reinforcement learning. Two key impediments to the use of RL in large-scale, practical recommenders are (a) handling the combinatorics of slate-based action spaces; and (b) constructing the underlying representations.

To handle the first, we developed \SlateQ{}, a novel decomposition technique for slate-based RL that allows for effective TD and Q-learning using LTV estimates for individual items. It requires relatively innocuous assumptions about user choice behavior and system dynamics, appropriate for many recommender settings. The decomposition allows for effective TD and Q-learning by reducing the complexity of generalization and exploration to that of learning for individual items---a problem routinely addressed by practical myopic recommenders. Moreover, for certain important classes of choice models, including the conditional logit, the slate optimization problem can be solved tractably using optimal LP-based and heuristic greedy and top-$k$ methods. Our results show that \SlateQ{} is relatively robust in simulation, and can scale to large-scale commercial recommender systems like YouTube.

Our second contribution was a practical methodology for the introduction of RL to extant, myopic recommenders. We proposed the use of existing myopic models to bootstrap the development of Q-function-based RL methods, in a way that allows the substantial reuse of current training and serving infrastructure. Our live experiment in YouTube recommendation exemplified the utility of this methodology and the scalability of \SlateQ{}. It also demonstrated that using LTV estimation can improve user engagement significantly in practice.

There are a variety of future research directions that can extend the work here. First, our methodology can be extended by relaxing some of the assumptions we made regarding the interaction between user choice and system dynamics. For instance, we are interested in models that allow unconsumed items on the slate to influence user latent state and choice models that allow for multiple items on a slate to be used/clicked. Further analysis of, and the development of corresponding optimization procedures for, additional choice models using \SlateQ{} remains of intense interest (e.g., hierarchical model such as nest logit). In a related vein, methods for simultaneous learning of choice models, or their parameters, while learning Q-values would be of great practical value. Finally, the simulation environment has the potential to serve as a  platform for additional research on the application of RL to recommender systems. We hope to release a version of it to the research community in the near future.



\paragraph{Acknowledgments.} Thanks to Larry Lansing for system optimization and the IJCAI-2019 reviewers for helpful feedback.



\end{document}